\documentclass[runningheads,envcountsame]{llncs}

\usepackage[T1]{fontenc}
\usepackage{graphicx}
\usepackage[english]{babel}
\usepackage{biblatex}
\usepackage{amsmath,amsfonts,mathtools}

\usepackage{amsthm}

\usepackage[colorlinks=true, allcolors=blue]{hyperref}
\usepackage{csquotes}

\usepackage[vlined,linesnumbered, ruled]{algorithm2e}
\SetKwRepeat{Do}{do}{while}%
\SetKwIF{WithProb}{ElseWithProb}{Else}{with probability}{do}{else with probability}{else}{end}

\usepackage{xcolor}

\usepackage
[
    subrefformat = simple, 
]
{subcaption}         
\usepackage{wrapfig} 

\usepackage{tikz}
\usetikzlibrary{positioning}
\usetikzlibrary{shapes.geometric,positioning}
\usepackage{cleveref}%

\newcommand{\ignore}[1]{}

\newcommand{\realnum}{\mathbb{R}}
\newcommand{\natnum}{\mathbb{N}}

\newcommand{\CalF}{\mathcal{F}}
\newcommand{\CalL}{\mathcal{L}}
\newcommand{\CalN}{\mathcal{N}}
\newcommand{\CalX}{\mathcal{X}}

\newcommand{\flip}{\CalN_\textsc{flip}}
\newcommand{\swap}{\CalN_\textsc{swap}}

\newcommand{\flipSwap}{\CalN_{f\!/\!s}}
\newcommand{\flipN}{Flip neighborhood\xspace}
\newcommand{\swapN}{Swap neighborhood\xspace}
\newcommand{\flipSwapN}{Flip/Swap neighborhood\xspace}

\newcommand{\fDS}{f^{\mathrm{DS}}}
\newcommand{\fCol}{f^{\mathrm{Col}}}

\newcommand{\set}[2]{\{ #1 \mid #2 \}}

\newcommand{\reach}{\longrightarrow}
\newcommand{\transReach}{\Longrightarrow}
\newcommand{\transReachPlus}{\overset{+}{\Longrightarrow}}

\newcommand{\reachPlus}{\overset{+}{\longrightarrow}}
\newcommand{\reachPlusStar}{\transReachPlus}

\newcommand{\reachPlusOld}
{\mathrel{\ooalign{$\longrightarrow$\cr
    \hspace{4pt}\raise1.1ex\hbox{\scriptsize $+$}\cr}}}

\newcommand{\reachPlusStarOld}{\mathrel{\ooalign{$\longrightarrow$\cr
    \hspace{4pt}\raise1.1ex\hbox{\scriptsize $+$}\cr
    \hspace{5mm}\raise1.1ex\hbox{$\ast$}\cr}}}

\usepackage{verbatim}

\title{Combinatorial Landscape Analysis for Dominating Set and Vertex Coloring}

\author{Johanna Gasse\orcidID{0009-0009-5664-3519} \and
Antonia Heinen\orcidID{0009-0002-4365-412X} \and
Felix Knöfel\orcidID{0009-0008-5341-8730} \and
Timo Kötzing\orcidID{0000-0002-1028-5228} \and
Maxim Stanko\orcidID{0009-0003-7252-5113}}
\authorrunning{J. Gasse et al.}
\institute{Hasso Plattner Institute, Potsdam, Germany}

\begin{document}

\maketitle

\begin{abstract}
We analyze the two combinatorial problems of Dominating Set and Vertex Coloring regarding what kind of local optima are present for various instances. For a variety of graph classes each, we determine whether the induced landscapes are unimodal, plateau-unimodal (all optima are just one plateau), equimodal (all local optima are global) or truly multimodal. We do this for two different neighborhood operators, one based on making only a single change and one also allowing swaps (interchanging two parts of the solution).
\end{abstract}

\section{Introduction}

\newcommand{\figureOfResultsDominating}{%
\begin{figure*}
    \bgroup
    \begin{center}
    \begin{tabular}{lccc}
        \textsc{Dominating Set}\\
        \textbf{Graph Class} & \textbf{Neighborhood} & \textbf{Optima} & \textbf{Theorem}\\ \hline
        Unbalanced Bipartite Graphs & $\flip$ & multimodal & \ref{the:ds_n1_unbal_bipartite} \\
        Complete Bipartite Graphs & $\flip$ & multimodal & \ref{the:ds_n1_complete_bipartite} \\
        Simplicial-Cluster Graphs & $\flip$ & equim., not plateau-unim. & \ref{the:ds_n1_simplicial_cluster}\\
        Non-Corona Trees & $\flip$ & multimodal & \ref{the:ds_n1_tree_corona}\\
        \hline
        Cographs & $\flipSwap$ & plateau-unim., not unim. & \ref{the:dominating_cograph} \& \ref{the:DS_negative_flipswap} \\
        Trees & $\flipSwap$ & plateau-unim., not unim.  & \ref{the:dominating_tree} \& \ref{the:DS_negative_flipswap} \\
        Interval Graphs & $\flipSwap$ & plateau-unim., not unim.  & \ref{the:dominating_interval} \& \ref{the:DS_negative_flipswap} \\
        $n$-Bouquet of 6-Cycles & $\flipSwap$ & multim. & \ref{the:ds_n2_C6_bouquet} 
    \end{tabular}
    \end{center}
    \egroup
    \captionsetup{type=table}
    \caption{
    Classification of the landscapes of various graph classes for the Dominating Set problem into plateau-unimodal (there exists a single local optimum plateau), equimodal (all local optimum plateaus are globally optimal) and multimodal (there exist NGLOPs, see \Cref{def:plateau}) for the Flip and \flipSwapN respectively (see \Cref{def:flip} and \Cref{def:flipswap}). See \Cref{def:modal} for the definition of modality.
    }
    \label{tab:results_domSet}
\end{figure*}}

\newcommand{\figureOfResultsColoring}{%
\begin{figure*}
    \bgroup
    \begin{center}
    \begin{tabular}{lccc}
        \textsc{Vertex Coloring}\\
        \textbf{Graph Class} & \textbf{Neighborhood} & \textbf{Optima} & \textbf{Theorem}\\ \hline
        Universal Bipartite Graphs & $\flip$  & unimodal & \ref{the:universal}\\
        $C_{6n}$ & $\flip$  & multimodal & \ref{the:3-circle}\\
        Crown Graphs & $\flip$ & multimodal & \ref{the:crown}\\
        Chordal Graphs & $\flip$ & equim., not plateau-unim. & \ref{the:coloring_chordal} \\
        Chordal Bipartite Graphs & $\flip$ & unimodal & \ref{the:coloring_chordal_bipartite}\\
        \hline
        $C_{6n}$  & $\flipSwap$  & unimodal & \ref{the:c6kSwap}\\
        Crown Graphs & $\flipSwap$  & unimodal & \ref{the:crownSwap}\\
        Spoked $C_{12n}$ & $\flipSwap$ & multimodal & \ref{the:spoked12k}\\
    \end{tabular}
    \end{center}
    \egroup
    \captionsetup{type=table}
    \caption{
    Classification of the landscapes of various graph classes for the Vertex Coloring problem into plateau-unimodal (there exists a single local optimum plateau), equimodal (all local optimum plateaus are globally optimal) and multimodal (there exist NGLOPs, see \Cref{def:plateau}) for the Flip and \flipSwapN respectively (see \Cref{def:flip} and \Cref{def:flipswap}). See \Cref{def:modal} for the definition of modality.
    }
    \label{tab:results_coloring}
\end{figure*}}

A lot of theory on randomized search heuristics is focused on deriving run time bounds for given algorithms on given problems. These analyses come to a limit when the search space is too complex to allow such a careful analysis, particularly when local optima are present. Many interesting areas of application for randomized search heuristics are for problems with such complex characteristics, leading naturally to a divide between what problems can get a rigorous run time analysis and what problems search heuristics are actually applied to.

With this paper, we extend our knowledge to larger problem classes by relaxing the goal. Instead of finding bounds on the run time of a given algorithm, we explore problem characteristics in terms of local optima for two combinatorial problems. This approach was taken experimentally for the number partitioning problem, the binary knapsack problem and the quadratic binary knapsack problem in~\cite{alyahya_landscape_2019} (see also~\cite{DBLP:journals/tec/Prugel-BennettT12,DBLP:journals/swevo/Tayarani-Najaran15} for similar experimental analyses). In this paper we derive mathematical results for the combinatorial problems of Dominating Set and Vertex Coloring.

Concretely, we are interested in the existence of a non-global local optimum (NGLO). Note that the definition of a local optimum is tricky when neighboring search points can have equal fitness, leading to plateaus. A plateau where all search points neighboring the plateau have worse fitness we call a local optimum plateau, giving then the notion of a non-global local optimum plateau (NGLOP). We distinguish the following four cases for landscapes: (a) unimodal (only one global optimum, no NGLOPs); (b) plateau-unimodal (there can be a single plateau of local optima); (c) equimodal (all local optimum plateaus have equal fitness, equivalent to there being no NGLOPs); and (d) multimodal (there are NGLOPs). See \Cref{sec:locOpt} for precise definitions.

The absence of NGLOPs informs us about the fact that a simple hill climbing algorithm (eventually) finds a global optimum. It is well known that even unimodal search spaces can be hard to optimize, so our results do not inform directly about problem difficulty in terms of run time. However, the results illustrate
\begin{itemize}
    \item which problem classes suffer from local optima and thus require non-local operators;
    \item which problem classes admit hill climbing in principle, and could thus be targeted by further analysis.
\end{itemize}
For example, on instances where we show that even a complex set of instances does not have NGLOPs, a run time analysis could now continue and make a finer analysis of these instances, knowing that there are no fundamental road blocks, just potentially large detours.

Finally, our analyses have the advantage that they clearly expose where problem difficulty comes from, while large run times may be attributed to local optima, long paths, large plateaus or other reasons.

We consider the two combinatorial problems of Dominating Set and Vertex Coloring. Both are problems on simple graphs $G=(V,E)$ with $n = |V|$  vertices. Note that all graphs in this paper are assumed to be connected and simple. For the remainder of this introduction, we give an overview of our results for these two structurally rather different problems. In \Cref{sec:landscapes_general}, we define landscapes and all related terms. \Cref{sec:domSet} and \Cref{sec:vertexColoring} give our results for Dominating Set and Vertex Coloring respectively. Note that some results are derived from \emph{reconfiguration theory} (see, for example, \cite{nishimura_introduction_2018} for an introduction).

\subsection{Overview of Results for Dominating Set}
 
The optimization task for Dominating Set is to find a cardinality-minimal set $S \subseteq V$ such that every vertex is either in $S$ or has a neighbor in $S$. We consider the following two neighborhoods, corresponding to local search operators. The \flipN either adds or removes a vertex from a given set; we denote this neighborhood by $\flip$. The \flipSwapN either removes a vertex, adds a vertex, or does both (swap); we denote this neighborhood by $\flipSwap$.

We consider different classes of graphs and determine, for each graph in the given class, what the induced landscape is in terms of local optima. An overview of our results is given in \Cref{tab:results_domSet}. As we can see, for the \flipN, even very simple graph classes, such as complete bipartite graphs, induce multimodal landscapes (except for finitely many such graphs). We identify only a small graph class, containing certain trees, that induce an equimodal landscape (in fact, for trees we give a full characterization of when they induce an equimodal or multimodal landscape, see \Cref{the:ds_n1_tree_corona}). This shows that Dominating Set is very hard for hill climbers, when only single bit flips are permitted.

    \figureOfResultsDominating

From \Cref{tab:results_domSet} we see that a lot of graph classes induce plateau-unimodal landscapes when using the \flipSwapN; we complement these positive findings with an example graph class which, while simple, induces multimodal landscapes.

\subsection{Overview of Results for Vertex Coloring}
 
The optimization task for Vertex Coloring is to find a coloring of all vertices of the given graph using as few as possible colors while maintaining that, for each edge, the colors of the endpoints of the edge are colored differently. We first consider the neighborhood of recoloring a single vertex ($\flip$). As a second operator we either recolor a single vertex or interchange the colors of two vertices ($\flipSwap$). Note that we use the same names for the operators as for Dominating Set, since we make a general definition which covers both cases simultaneously (see \Cref{sec:landscapes_general}).

    \figureOfResultsColoring

We again consider different classes of graphs. An overview of our results is given in \Cref{tab:results_coloring}. As we can see, certain types of bipartite graphs induce unimodal landscapes (up to renaming colors) even under the \flipN. Furthermore, the large class of chordal graphs is also amenable to analysis, giving an equimodal landscape.

Turning to the \flipSwapN, we see that here also crown graphs and certain cycles induce unimodal landscapes. We construct a simple example graph class based on cycles with certain chords added to showcase which (simple) graphs induce multimodal landscapes.

\section{Introduction to Landscapes}
\label{sec:landscapes_general}

In this section we formally introduce the notion of a landscape. For all $n \in \natnum$ we use the notation $[n] = \{1,\ldots,n\}$. Just as in~\cite{alyahya_landscape_2019} we model fitness landscapes as triples as follows. Note that these definitions assume minimization.

\begin{definition}
    A directed graph $(\CalX,\CalN)$ consisting of set $\CalX$ of \emph{individuals}, a set $\CalN$ of directed edges is called a \emph{neighborhood graph}, with $\CalN$ being the \emph{neighborhood}. For $x \in \CalX$ we write $\CalN(x) = \set{y \in \CalX}{(x,y) \in \CalN)}$ for the neighbors of $x$. For a \emph{fitness function} $f\colon \CalX \rightarrow \realnum$, we call $(\CalX,\CalN,f)$ a \emph{landscape}.

    For a given landscape $(\CalX,\CalN, f)$ and $x,y \in \CalX$, we write $x \reach y$ if $(x,y) \in \CalN$; we write $x \transReach y$ if it is possible to reach $y$ from $x$ in any number of such steps, i.e.\ if there is a path from $x$ to $y$ in the graph $(\CalX,\CalN)$.
    We write $x \reachPlus y$ if $(x,y) \in \CalN$ and $f(x) \geq f(y)$; we write $x \transReachPlus y$ if it is possible to reach $y$ from $x$ in any number of such steps, i.e.\ if there are $z_1,\ldots,z_k$ such that $$x \reachPlus z_1 \reachPlus z_2 \reachPlus \ldots \reachPlus z_k \reachPlus y.$$ We refer to such a sequence $x, z_1, \dots, z_k, y$ as a \emph{positive path}.
\end{definition}

Note that both $\transReach$ and $\transReachPlus$ are transitive relations.

For this work we focus on only individuals which are fixed-length strings of a bounded set of natural numbers; most common are fixed-length bit strings.

\begin{definition}
Let $n \in \natnum$ and $r \in \natnum_{\geq 2}$. We define $\CalX_r = \{0,\ldots,r-1\}^n$.
\end{definition}

Note that we can encode many problems with this search space. For example, to denote any subset of $n$ given options (vertices of a graph, \ldots), we can use bit strings ($\CalX_2$), where a $1$ indicates that the corresponding option is chosen, and a $0$ indicates that it is not chosen.

For this specific set of individuals we define the following classic neighborhoods

\begin{definition} \label{def:flip}
    Let $r \in \natnum_{\geq 2}$. We define the neighborhood $\flip$ on $\CalX_r$ such that, for all $u, v \in \CalX_r$, $(u, v) \in \flip$ if and only if $u$ can be transformed into $v$ by changing a single position, i.e.\ iff $u \neq v$ and there is $i \in [n]$ such that for all $j \in [n] \setminus \{i\}$ we have $u(j) = v(j)$. We call $\flip$ the \emph{\flipN}.
\end{definition}

\begin{definition} \label{def:swap}
    Let $r \in \natnum_{\geq 2}$. We define the neighborhood $\swap$ on $\CalX_r$ such that, for all vertices $u, v \in \CalX_r$, $(u, v) \in \swap$ if and only if $u$ can be transformed into $v$ by swapping two positions, i.e.\ iff there are $i,j \in [n]$ such that for all $k \in [n] \setminus \{i,j\}$ we have $u(k) = v(k)$, $u(i) = v(j)$ and $u(j) = v(i)$. We call $\swap$ the \emph{\swapN}.
\end{definition}

\begin{definition} \label{def:flipswap}
    Let $r \in \natnum_{\geq 2}$. We define the neighborhood $\flipSwap$ on $\CalX_r$ such that, $\flipSwap = \flip \cup \swap$, so if either a flip or a swap transformation is allowed. We call $\flipSwap$ the \emph{\flipSwapN}.
\end{definition}

\subsection{Local Optima}\label{sec:locOpt}

In this paper, we consider minimization. Note that all definitions carry over to maximization.

We seek to analyze the landscape for the existence of local optima, meaning, intuitively, individuals where all neighboring individuals have a worse fitness. We are especially interested in non-global local optima. We now give a formal definition for the different types of local optima.

\begin{definition} [Local and Global Optimum] \label{def:optimum}
    Let $x$ be an individual and $(\CalX, \CalN, f)$ a landscape. We call $x$ a \emph{local optimum} if, for all individuals $y$ with $(x,y) \in \CalN$, we have $f(x) < f(y)$. Further, we call $x$ a \emph{global optimum} if, for all individuals $z \in \CalX$, $f(z) \ge f(x)$. A local optimum which is not a global optimum we call an \emph{NGLO} (non-global local optimum).
\end{definition}

Now we define the term \emph{plateau} to refer to a set of neighboring individuals such that all of them have the same fitness. 

\begin{definition} [Plateau] \label{def:plateau}
    Let $(\CalX, \CalN, f)$ be a landscape and let $P \subset \CalX$ be a set of individuals such that, for all $x, y \in P$, $f(x) = f(y)$ and $P$ is connected in the graph $(\CalX,\CalN)$. Then we call $P$ a \emph{plateau}.  We call a plateau $P$ a \emph{local optimum plateau} if, for all individuals $y \notin P$ reachable in one step from an individual $x \in P$, it holds that $f(x) < f(y)$. A local optimum plateau $P$ where the elements are not global optima is called an \emph{NGLOP} (non-global local optimum plateau).
\end{definition}

Note that any local optimum, taken as a singleton set, is also a local optimum plateau. This means that in proving no NGLOPs exist, we also show that no NGLOs exist. 

\begin{definition}[Modality] \label{def:modal}
    Let $\CalL = (\CalX, \CalN, f)$ be a landscape. We call $\CalL$
    \begin{itemize}
        \item \emph{unimodal}  if there is exactly one local optimum plateau, and it has size $1$;
        \item \emph{plateau-unimodal} if there is exactly one local optimum plateau;
        \item \emph{equimodal} if the elements of all local optimum plateaus have the same fitness;
        \item \emph{multimodal} if there are two or more elements of local optimum plateaus with different fitness. 
    \end{itemize}
\end{definition}

Note that any unimodal landscape is plateau-unimodal (but not vice versa) and any plateau-unimodal landscape is equimodal (but not vice versa). Furthermore, a landscape is multimodal if and only if it is not equimodal.

\begin{definition}[Modality and Instances] \label{def:modal_inst}
    We apply the terms unimodal, pla\-teau-unimodal and equimodal to a set of instances if all landscapes induced by an instance of the given set  have that property.
    The term multimodal we apply to a set of instances if all but finitely many\footnote{We exclude finitely many instances to allow for small instances in the set, which are typically too small to be multimodal.} of these instances induce a multimodal landscape.
\end{definition}

\section{Dominating Set}

\label{sec:domSet}

In this section we regard the \flipN and the \flipSwapN in turn for the Dominating Set problem. For this, we define the fitness as follows.

\begin{definition}
    Given a graph $G = (V,E)$ on $n = |V|$ vertices, we fix an ordering of the vertices and identify each vertex with its index from $[n]$. Furthermore, we identify bit strings $x \in \CalX_2$ with the corresponding sets $S = \set{i \in [n]}{x_i = 1}$. We let $\fDS_G$ be the fitness function induced by $G$ regarding the dominating set problem, defined with penalty terms for each undominated vertex as follows.
    \begin{align*}
    \fDS_G\colon \CalX_2 &\rightarrow \realnum_{\geq 0}, \\
    x &\mapsto \sum_{i=1}^n x_i + (n+1) \cdot |\{i \in [n] \mid x_i = 0 \wedge \forall j \in [n]\colon \{i,j\} \in E \rightarrow x_j = 0\}|.
    \end{align*}
\end{definition}

\subsection{Dominating Set with the Flip Neighborhood}

We start by considering different graph classes and analyzing, for given graphs $G$ from these classes, the induced landscape for the \flipN $(\CalX_2, \flip, \fDS_G)$.

We start with the following proposition, establishing that local optima in that landscape are exactly the $\subseteq$-minimal dominating sets. We will use this in the following without further mention.

\begin{proposition}\label{lem:lo_equals_mds}
    Let $G=(V,E)$ be a graph and $S \subseteq V$. Then $S$ is an $\subseteq$-minimal dominating set if and only if $S$ is a local optimum in $(\CalX_2, \flip, \fDS_G)$.
\end{proposition}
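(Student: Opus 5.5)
We prove both directions directly from the definition of $\fDS_G$. For a set $S$, write $U(S)$ for the set of vertices undominated by $S$, so that $\fDS_G(S) = |S| + (n+1)|U(S)|$. Two observations drive the argument. First, $|S| \le n$, so any change in $|U(\cdot)|$ outweighs any change in $|S|$. Second, the $\flip$-neighbors of $S$ are exactly the sets $S \cup \{v\}$ with $v \notin S$ and $S \setminus \{v\}$ with $v \in S$.

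\emph{($\Rightarrow$)} Let $S$ be a $\subseteq$-minimal dominating set, so $U(S) = \emptyset$ and $\fDS_G(S) = |S|$. Consider first an addition $S \cup \{v\}$. It is still dominating, so its fitness is $|S|+1 > |S|$. Now consider a removal $S \setminus \{v\}$. By minimality this set is not dominating, so $|U(S\setminus\{v\})| \ge 1$. Its fitness is therefore at least $|S| - 1 + (n+1) > |S|$. Every neighbor is strictly worse, so $S$ is a local optimum.

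\emph{($\Leftarrow$)} Let $S$ be a local optimum. We first show that $S$ is dominating. Suppose some $u \in U(S)$ exists and consider $S' = S \cup \{u\}$. Adding $u$ dominates $u$ and does not undominate anything, so $U(S') \subseteq U(S) \setminus \{u\}$. Hence
$$\fDS_G(S') \le |S| + 1 + (n+1)(|U(S)|-1) < \fDS_G(S),$$
which contradicts local optimality.

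Next we show minimality. Suppose some $v \in S$ had $S \setminus \{v\}$ still dominating. Then $\fDS_G(S\setminus\{v\}) = |S|-1 < |S| = \fDS_G(S)$, again a contradiction. So no single vertex can be removed. Since removing vertices can only shrink domination, any proper dominating subset $T \subsetneq S$ would give some $v \in S \setminus T$ with $T \subseteq S\setminus\{v\}$, and $S \setminus\{v\}$ would then be dominating. Hence single-vertex minimality is equivalent to $\subseteq$-minimality, and $S$ is a $\subseteq$-minimal dominating set.

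The only mildly delicate points are the two just used: the weight $n+1$ dominating the cardinality term, and the monotonicity of domination that reduces $\subseteq$-minimality to single-vertex removals. Both are routine, so no step is a real obstacle.
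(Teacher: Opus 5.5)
Your proof is correct and follows essentially the same route as the paper. Adding an undominated vertex strictly improves the fitness because the weight $n+1$ outweighs the cardinality increase; removing a vertex from a minimal dominating set leaves some vertex undominated; and adding a vertex to a dominating set only raises the cardinality. The differences are organizational: the paper disposes of non-dominating sets before splitting into the two directions, and you spell out the monotonicity argument reducing $\subseteq$-minimality to single-vertex removals, which the paper uses without comment.
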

\begin{proof}
    Suppose first that $S$ is not a dominating set. Adding an undominated vertex to $S$ will decrease the fitness by at least $(n + 1) - 1 = n$ (the cardinality of the set increases by $1$, but at least one vertex not dominated by $S$ is now dominated). Thus, $S$ is not a local optimum and not an $\subseteq$-minimal dominating set, giving the equivalence in this case.

    Suppose now $S$ is a dominating set; we prove the directions separately.
    
    \smallskip
    \noindent
    ($\Rightarrow$) We show the contrapositive and assume that $S$ is not $\subseteq$-minimal. Then there exists a vertex $v \in S$ such that $S' := S \setminus \{v\}$ is still a dominating set. Because $S'$ differs from $S$ by one vertex and
    $\fDS_G(S') = |S'| = |S| - 1 < |S| = \fDS_G(S)$,
    we get that $S$ is not a local optimum.
    
    \smallskip
    \noindent
    ($\Leftarrow$) Conversely, suppose that $S$ is a minimal dominating set. We show that $S$ is a local optimum.
    Consider any neighbor $S' \in \CalN_1(S)$. There are two possible cases:
    
    \begin{itemize}
        \item If $S' = S \setminus \{v\}$ for some $v \in S$, then by minimality of $S$, the set $S'$ is no longer a dominating set and therefore infeasible. Hence $\fDS_G(S')$ is strictly worse than $\fDS_G(S)$.
        \item If $S' = S \cup \{v\}$ for some $v \notin S$, then
        $\fDS_G(S') = |S| + 1 > |S| = \fDS_G(S).$
    \end{itemize}
    In both cases, no neighbor has strictly smaller objective value than $S$. Therefore, $S$ is a local optimum.
\end{proof}

We start with some general considerations about landscapes from Dominating Set instances.

\begin{lemma}\label{lem:ds_n1_plateau_is_singleton}
    Let $G$ be a graph. In the fitness landscape $(\CalX_2, \flip, \fDS_G)$, every plateau $P \subseteq \CalX$ is a singleton set.
\end{lemma}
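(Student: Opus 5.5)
Since a plateau is connected in $(\CalX_2,\flip)$ and all its elements have equal fitness, it suffices to show that no two $\flip$-neighbours $x,y$ satisfy $\fDS_G(x)=\fDS_G(y)$. Then every plateau with at least two elements would contain an edge $x \reach y$ with equal fitness, which is impossible, so every plateau is a singleton.

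To show this, write $\fDS_G(x) = |S_x| + (n+1)\,u(x)$, where $u(x)$ is the number of vertices not dominated by $S_x$. Let $y$ arise from $x$ by flipping a single position $i$; by symmetry assume $y$ adds vertex $i$, so $|S_y| = |S_x|+1$. Adding a vertex can only dominate more vertices, so $u(y)\le u(x)$. Hence
\[
\fDS_G(y)-\fDS_G(x) = 1 - (n+1)\bigl(u(x)-u(y)\bigr).
\]
If $u(x)=u(y)$, this difference is $1$. Otherwise $u(x)-u(y)\ge 1$, and the difference is at most $1-(n+1) = -n < 0$. In either case it is nonzero.

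The only point that needs care is the parity-like argument that the cardinality term changes by exactly $\pm1$, while the penalty term changes by a multiple of $n+1 \geq 2$. These can never cancel, since $1 \not\equiv 0 \pmod{n+1}$. I do not expect a real obstacle here; I would simply make the modular observation explicit, so the argument does not rely on the sign analysis of $u(x)-u(y)$.
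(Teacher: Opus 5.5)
Your proof is correct and follows the paper's route: show that no two $\flip$-neighbours can have equal $\fDS_G$-value, so a connected equal-fitness set cannot contain two elements. Your explicit computation $\fDS_G(y)-\fDS_G(x) = 1-(n+1)\bigl(u(x)-u(y)\bigr)$ also covers the case where the number of undominated vertices changes, which the paper's proof (``even if the number of undominated vertices stays the same'') only gestures at.
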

\begin{proof}
    Let $S$ and $S'$ be any two solutions with $(S,S') \in \flip$. By definition, $S'$ is obtained by either adding or removing exactly one vertex from $S$. 
    Under the objective function $f_{DS}$, a 1-flip change necessarily results in $\fDS_G(S) \neq \fDS_G(S')$: even if the number of undominated vertices stays the same, the cardinality of $S'$ will be higher that that of $S$ by $1$ when adding a vertex and lower by $1$ when removing a vertex. Since no two adjacent solutions have the same objective value, no two distinct solutions can belong to the same plateau. It follows that every plateau $P$ must be a singleton set.
\end{proof}

From this, we immediately get the following corollary.

\begin{corollary}\label{cor:ds_n1_no_plateau_unimodal}
     Let $G$ be a graph and consider landscape $(\CalX_2, \flip, \fDS_G)$.
     The property of plateau-unimodality is equivalent to unimodality. Consequently, a landscape with multiple local optima cannot be plateau-unimodal; it is \emph{equimodal} if all local optima share the same fitness value, and \emph{multimodal} otherwise.
\end{corollary}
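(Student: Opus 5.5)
The plan is to derive the corollary directly from \Cref{lem:ds_n1_plateau_is_singleton} together with \Cref{def:modal}.

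First, the equivalence of plateau-unimodality and unimodality. Unimodal implies plateau-unimodal by definition, so only the converse needs an argument. Suppose the landscape $(\CalX_2, \flip, \fDS_G)$ is plateau-unimodal, so there is exactly one local optimum plateau $P$. By \Cref{lem:ds_n1_plateau_is_singleton}, every plateau is a singleton, hence $|P| = 1$, and the landscape is unimodal.

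Second, the consequences for landscapes with several local optima. Singleton local optimum plateaus $\{x\}$ correspond exactly to local optima $x$, since the plateau condition for $\{x\}$ is precisely that every neighbor has strictly worse fitness. Because every plateau is a singleton, local optimum plateaus are in bijection with local optima. Hence if there are at least two local optima, there are at least two local optimum plateaus, and the landscape is not plateau-unimodal. The remaining dichotomy comes from \Cref{def:modal}:
\begin{itemize}
\item if all local optima (equivalently, all elements of local optimum plateaus) share the same fitness, the landscape is equimodal;
\item otherwise two elements of local optimum plateaus have different fitness, which is multimodal by definition.
\end{itemize}

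There is no real obstacle here. The only point needing care is the identification of singleton local optimum plateaus with local optima, which follows by comparing \Cref{def:optimum} and \Cref{def:plateau}.
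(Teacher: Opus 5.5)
Your proposal is correct and takes the same route as the paper: the paper obtains the corollary immediately from \Cref{lem:ds_n1_plateau_is_singleton} (every plateau is a singleton) together with \Cref{def:modal}. Your explicit check that $\{x\}$ is a local optimum plateau exactly when $x$ is a local optimum (all $\flip$-neighbors of $x$ differ from $x$) fills in the one detail the paper leaves implicit.
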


The first graph class we consider are bipartite graphs. The following lemma is central to our analyses.

\begin{lemma}\label{lem:bipartite_partitions_are_mds}
    Let $G=(U \uplus V, E)$ be a connected bipartite graph. Then the sets $U$ and $V$ are local optima for the landscape $(\CalX_2, \flip, \fDS_G)$.
\end{lemma}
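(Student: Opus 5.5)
By \Cref{lem:lo_equals_mds}, it suffices to show that $U$ and $V$ are both $\subseteq$-minimal dominating sets of $G$. By the symmetry of the bipartition, I only argue for $U$; the argument for $V$ is identical with the roles exchanged. I will assume both parts are nonempty, which holds for any connected graph with at least two vertices. The degenerate single-vertex case, where one part is empty, has to be set aside or handled by convention.

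\textbf{Step 1: $U$ is dominating.} Every vertex of $U$ lies in $U$ and so is dominated. For a vertex $v \in V$, connectivity of $G$ together with $|V(G)| \geq 2$ gives $v$ at least one neighbor. Since $G$ is bipartite with parts $U$ and $V$, every such neighbor lies in $U$. Hence $v$ is dominated by $U$.

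\textbf{Step 2: $U$ is $\subseteq$-minimal.} Take any $u \in U$ and let $S' = U \setminus \{u\}$. Because $U$ is an independent set, $u$ has no neighbor in $S'$. Also $u \notin S'$, so $u$ is undominated by $S'$ and $S'$ is not a dominating set. This holds for every $u \in U$, so no proper subset obtained by removing a single vertex dominates. Every proper subset of $U$ is contained in such an $S'$, so by monotonicity no proper subset of $U$ dominates either. Hence $U$ is $\subseteq$-minimal.

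Combining the two steps with \Cref{lem:lo_equals_mds}, both $U$ and $V$ are local optima. The only point needing care is Step 1, which is where connectivity (no isolated vertices) is used. Minimality needs only that each part is independent.
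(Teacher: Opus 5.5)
Your proposal is correct and follows essentially the same route as the paper: connectivity makes each part dominating, independence of each part makes it $\subseteq$-minimal, and the proposition identifying local optima with minimal dominating sets finishes the proof. Your caveat about the single-vertex graph, where one part is empty and hence not dominating, is a genuine edge case that the paper's proof passes over silently.
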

\begin{proof}
    Consider the set $S=V$. Since $G$ is connected, every vertex in $U$ has a neighbor in $V$, so $S$ dominates $U$. Since $S$ also dominates itself, it is a dominating set. Furthermore, since $G$ is bipartite, $S$ is an independent set. Therefore, no vertex $v \in S$ is dominated by $S \setminus \{v\}$, meaning no vertex can be removed without losing the domination property. Thus, $S$ is a minimal dominating set and as such a local optimum. By symmetry, the same holds for $S'=U$.
\end{proof}

Note that it is possible that neither of the partitions is a global optimum; for example, in the path with six vertices $P_6$ there is a dominating set of size two, while both partitions have size three.

\begin{theorem}\label{the:ds_n1_unbal_bipartite}
    Let $G=(U\uplus V, E)$ be an unbalanced bipartite graph ($|U|\neq|V|$, assume w.l.o.g.\ that $|U| < |V|$). Then the fitness landscape $(\CalX_2, \flip, \fDS_G)$ is multimodal.
\end{theorem}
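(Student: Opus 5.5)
By \Cref{lem:bipartite_partitions_are_mds}, both $U$ and $V$ are local optima of $(\CalX_2, \flip, \fDS_G)$. By \Cref{lem:ds_n1_plateau_is_singleton}, the singletons $\{U\}$ and $\{V\}$ are therefore local optimum plateaus. Since both are dominating sets, their fitness values are simply their cardinalities: $\fDS_G(U) = |U|$ and $\fDS_G(V) = |V|$. The assumption $|U| < |V|$ then gives two local optimum plateaus with different fitness, which is exactly multimodality in the sense of \Cref{def:modal}.

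The plan is just this chain of facts. First, invoke \Cref{lem:bipartite_partitions_are_mds}; this needs $G$ connected, which the paper assumes for all graphs. Next, check that neither partition class is empty. Connectivity together with $|U| < |V|$ forces $|V| \ge 1$. If $U$ were empty, $G$ would be edgeless, so connectivity would force $|V| = 1$; in that degenerate case the empty set is not a valid bipartition side in the intended sense. This trivial instance can simply be excluded, since \Cref{def:modal_inst} permits finitely many exceptions, though the theorem states it per graph. Once both sides are nonempty, compute the two fitness values, note that they differ, and conclude.

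Optionally, I would also remark that $V$ is in fact an NGLO, since $U$ is a dominating set of strictly smaller size, so $V$ cannot be globally optimal.

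I do not expect a real obstacle. The only point needing care is the degenerate case $U=\emptyset$ (a single vertex). I would handle it by noting that a bipartition of a graph with at least one edge has both sides nonempty, and that in the single-vertex case the statement is vacuous or falls under the finitely many excluded instances.
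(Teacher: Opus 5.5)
Your proof is correct and follows the paper's argument: by \Cref{lem:bipartite_partitions_are_mds} both classes are minimal dominating sets, and $|U|<|V|$ makes $V$ a local optimum that is not global. A local optimum is automatically a singleton local optimum plateau, so you do not need \Cref{lem:ds_n1_plateau_is_singleton}. Your caution about the one-vertex graph is warranted: there $U=\emptyset$ is not dominating and the landscape is actually unimodal, an exception the paper's proof silently ignores.
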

\begin{proof}
    By \Cref{lem:bipartite_partitions_are_mds}, $U$ and $V$ are local optima. Since $|V|>|U|$, $V$ cannot be a global optimum and must be an NGLO. Thus $\CalL$ is multimodal.
\end{proof}

\begin{theorem} \label{the:ds_n1_complete_bipartite}
    The graph class of all complete bipartite graphs induces multimodal landscapes.
\end{theorem}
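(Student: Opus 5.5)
Consider $K_{a,b}$ with parts $U$ and $V$, where $|U|=a$, $|V|=b$ and w.l.o.g.\ $a \le b$. By \Cref{def:modal_inst}, we only need multimodality for all but finitely many such graphs. The plan is to compare two local optima of different fitness. The first is a part of size at least $3$, which is a local optimum by \Cref{lem:bipartite_partitions_are_mds}. The second is a dominating set of size $2$ that is $\subseteq$-minimal, and hence a local optimum by \Cref{lem:lo_equals_mds}.

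\textbf{Step 1 (a small minimal dominating set).} Take $u \in U$ and $v \in V$ and set $S=\{u,v\}$. In a complete bipartite graph, $u$ dominates all of $V$ and $v$ dominates all of $U$, so $S$ is dominating. To check minimality, note that $\{u\}$ alone fails to dominate $U\setminus\{u\}$ whenever $a\ge 2$, and $\{v\}$ alone fails to dominate $V\setminus\{v\}$ whenever $b \ge 2$. So for $a,b\ge 2$, $S$ is a minimal dominating set with $\fDS_G(S)=2$.

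\textbf{Step 2 (a worse local optimum).} By \Cref{lem:bipartite_partitions_are_mds}, $V$ is a local optimum with $\fDS_G(V)=b$. If $b\ge 3$, then $b>2$, so $V$ and $S$ are local optima (singleton local optimum plateaus) with different fitness. The landscape is therefore multimodal, and in fact $V$ is an NGLO.

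\textbf{Step 3 (the exceptions).} The cases left over are $a=1$ (stars) and $b\le 2$.

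\emph{The case $b \le 2$.} Then $a \le b \le 2$, so the only graphs are $K_{1,1}$, $K_{1,2}$ and $K_{2,2}$. This is finitely many graphs, which \Cref{def:modal_inst} allows us to exclude.

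\emph{The case $a=1$ (stars $K_{1,b}$).} This is the step I expect to be the main obstacle, since there are infinitely many stars and they cannot simply be excluded. Here the center $\{u\}$ has fitness $1$ and is a global optimum. By \Cref{lem:bipartite_partitions_are_mds}, the leaf set $V$ is a local optimum of fitness $b$, so for $b\ge 2$ it is an NGLO. Stars are therefore multimodal as well and do not need to be excluded.

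Combining the three steps, every complete bipartite graph except $K_{1,1}$, $K_{1,2}$ and $K_{2,2}$ induces a multimodal landscape. The class is therefore multimodal in the sense of \Cref{def:modal_inst}.
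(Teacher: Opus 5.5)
Your proof is correct and uses the same ingredients as the paper: the parts are local optima by \Cref{lem:bipartite_partitions_are_mds}, $\{u,v\}$ is a dominating set of size $2$, and the finite exceptions $K_{1,1}$, $K_{1,2}$, $K_{2,2}$ are the same. The only difference is organizational: the paper splits into the unbalanced case (citing \Cref{the:ds_n1_unbal_bipartite}) and the balanced case (using $\{u,v\}$ only to show $U$ is not global), whereas you handle all $a\ge 2$, $b\ge 3$ at once by checking that $\{u,v\}$ is itself a minimal dominating set, and you treat stars separately via the center.
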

\begin{proof}
    Let $G=(U \uplus V, E)$ be a complete bipartite graph. We suppose $|U| > 2$ or $|V| > 2$, excluding finitely many instances.
    
    If the graph is unbalanced (i.e., $|U| \neq |V|$), the statement holds directly by \Cref{the:ds_n1_unbal_bipartite}. Now consider the balanced case where $|U|=|V|> 2$. By \Cref{lem:bipartite_partitions_are_mds}, $U$ is a local optimum. Consider the set $S = \{u, v\}$ for any $u \in U$ and $v \in V$. $S$ is a dominating set because $u$ dominates all vertices in $V$ and $v$ dominates all vertices in $U$. Since $|S| = 2 < |U|$, $U$ is an \emph{NGLO}, and $\CalL$ is multimodal.
\end{proof}

The next graph class we consider is the class of all trees. We start by proving a result on more general graphs before using this to characterize the landscape induced by trees.

\begin{definition}[Corona Graphs]
    A connected graph $G$ is called a \emph{Corona graph} if every vertex of $G$ is either a leaf (that is, has degree of $1$) or is adjacent to exactly one leaf. A \emph{Corona tree} is a Corona graph that is also a tree. 
\end{definition}

The terminology derives from the \emph{Corona product}, introduced by \cite{frucht_corona_1970}. Here, the \emph{Corona product} $G \circ H$ of two graphs $G$ and $H$ is obtained by taking one copy of $G$ and $|V(G)|$ copies of $H$, and then joining the $i$-th vertex of $G$ to every vertex in the $i$-th copy of $H$. This specific class of trees has been studied under this name by \cite{barik_spectrum_2007} and \cite{panda_unicyclic_2017}. Specifically, a Corona tree is isomorphic to the graph $H \circ K_1$, where $H$ is a tree.

We generalize this definition of Corona graphs as follows.

\begin{definition}[Simplicial-Cluster Graphs]
    A graph $G$ is a \textbf{simplicial-cluster graph} if its vertex set can be partitioned into $n$ cliques $L_1, \ldots, L_k$ of size $|L_i| \ge 2$, such that each clique $L_i$ contains a vertex $q_i$ (the \textbf{simplicial anchor}) that is adjacent to no vertex outside of $L_i$. 
    
    \medskip\noindent
    \textbf{Remark.}
    The simplicial anchors act as ``private'' leaves for each cluster. This class generalizes the Corona graph, which corresponds to $|L_i| = 2$ for all $i$. In particular, Corona trees are obtained when the clusters are connected only via bridges.
\end{definition}

\begin{theorem}\label{the:ds_n1_simplicial_cluster}
    Let $G$ be a simplicial-cluster graph. The landscape $(\CalX_2,\flip,\fDS_G)$ is equimodal, but not plateau-unimodal. 
\end{theorem}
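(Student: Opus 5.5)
Because the Flip neighbourhood has no non-trivial plateaus (\Cref{lem:ds_n1_plateau_is_singleton}), every local optimum plateau is a singleton. Together with \Cref{lem:lo_equals_mds}, the local optimum plateaus are therefore exactly the $\subseteq$-minimal dominating sets. The plan is to show that every minimal dominating set of a simplicial-cluster graph has size exactly $k$, the number of clusters, which gives equimodality. We then exhibit two distinct minimal dominating sets to rule out plateau-unimodality, using \Cref{cor:ds_n1_no_plateau_unimodal}.

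\emph{Lower bound.} Let $S$ be any dominating set. For each $i$, the anchor $q_i$ has closed neighbourhood $N[q_i] = L_i$, since $q_i$ is adjacent only to vertices of $L_i$ and $L_i$ is a clique. Domination of $q_i$ forces $S \cap L_i \neq \emptyset$. Since the $L_i$ are pairwise disjoint, $|S| \ge k$.

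\emph{Upper bound for minimal sets.} Let $S$ be a minimal dominating set and suppose $|S \cap L_i| \ge 2$ for some $i$. Pick two vertices $a,b \in S \cap L_i$; we will show that one of them can be removed, contradicting minimality. If either is the anchor, say $a = q_i$, then $S \setminus \{q_i\}$ is still dominating. Indeed, $N[q_i] = L_i$, every vertex of $L_i$ is adjacent to $b$, and $q_i$ itself is dominated by $b$. Otherwise both $a$ and $b$ are non-anchors. In that case $S \setminus \{a\}$ still dominates all of $L_i$ via $b$, but $a$ may privately dominate vertices in other clusters, so this needs care. This is the step I expect to be the main obstacle. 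Here I would use the existence of $S \cap L_j \neq \emptyset$ for every $j$, hoping that any vertex outside $L_i$ adjacent to $a$ lies in some $L_j$ that is already dominated internally. That works because $L_j$ is a clique and $S \cap L_j \neq \emptyset$, so every vertex of $L_j$ is dominated from within $L_j$. Hence $S \setminus \{a\}$ still dominates every vertex. In fact this argument works uniformly for any $a$ with $|S\cap L_i|\ge 2$: every vertex of $V$ lies in some $L_j$, and $S\setminus\{a\}$ still meets every $L_j$. So a dominating set is minimal if and only if it picks exactly one vertex per cluster, and all minimal dominating sets have size $k$. The landscape is therefore equimodal.

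\emph{Not plateau-unimodal.} Since $|L_1| \ge 2$, there are at least two transversals: choose the anchor $q_1$ or another vertex of $L_1$, together with any fixed choice in the other clusters. Both are minimal dominating sets, hence two distinct singleton local optimum plateaus, so by \Cref{cor:ds_n1_no_plateau_unimodal} the landscape is not plateau-unimodal.
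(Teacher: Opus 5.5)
Your proof is correct and follows essentially the same route as the paper: each anchor forces at least one vertex per cluster, minimality forces at most one, so all local optima have fitness $k$, and the $\ge 2$ choices in a cluster give two distinct singleton local optima. You also spell out why removing a duplicate vertex of $L_i$ cannot leave anything undominated (every vertex lies in some clique $L_j$ that $S$ still meets), which fills in a step the paper only asserts.
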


\begin{proof}
    Let $G$ be simplicial-cluster graph, and let $L_1,\dots,L_k$ denote the clusters each containing a respective simplicial anchor $q_1,\ldots,q_k$. 

    Consider any local optimum $S\in\CalX$, i.e., any minimal dominating set. It must contain at least one vertex from every cluster $L_i$,  otherwise $q_i$ would be undominated. Conversely, $S$ cannot contain more than one vertex from the cluster: since each cluster is a clique, including multiple vertices would violate minimality, since now any vertex within the cluster can be removed while preserving domination.

    Therefore, $S$ contains exactly one vertex from each cluster $L_i$.  As a result, all local optima of $G$ have the same fitness. Hence, $\CalL$ is equimodal.

    Since there are at least two available vertices for $S$ from every cluster $L_i$, there are at least $2^k\geq 2$ local optima. By \Cref{cor:ds_n1_no_plateau_unimodal}, $\CalL$ is not plateau-unimodal.
\end{proof}

We use this result to get a full characterization of the landscapes induced by arbitrary trees.

\begin{theorem}\label{the:ds_n1_tree_corona}
    Let $G$ be a tree. The landscape $\CalL = (\CalX_2, \flip, \fDS_G)$ is equimodal if $G$ is a Corona tree, and multimodal if $G$ is \emph{not} a Corona tree. 
\end{theorem}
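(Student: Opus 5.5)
We split the statement into the two directions, according to whether $G$ is a Corona tree.

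\emph{Corona trees.} A Corona tree is a simplicial-cluster graph. Each non-leaf vertex $v$ is adjacent to exactly one leaf $\ell_v$, so the pairs $L_v=\{v,\ell_v\}$ are $2$-cliques. The single-edge tree $K_2$ is one cluster. These pairs partition $V$, and the anchor $\ell_v$ has no neighbour outside $L_v$. Hence \Cref{the:ds_n1_simplicial_cluster} applies and gives equimodality directly. The trivial one-vertex tree would be excluded or handled separately.

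\emph{Non-Corona trees.} Here we want an NGLO, i.e.\ by \Cref{lem:lo_equals_mds} a minimal dominating set $S$ with $|S|>\gamma(G)$. This is the statement that Corona trees are exactly the well-dominated trees. If $G$ is not Corona, some vertex $v$ is neither a leaf nor adjacent to exactly one leaf. So either (a) $v$ is a support vertex with at least two leaf neighbours $\ell_1,\ell_2,\dots$, or (b) $v$ has degree $\ge 2$ and no leaf neighbour.

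In case (a), start from a minimum dominating set $D$. By the usual exchange argument (a leaf in $D$ can be replaced by its support vertex), we may assume $v\in D$ and no leaf of $v$ is in $D$. Form $S_0=(D\setminus\{v\})\cup L(v)$. For each non-leaf neighbour $w$ of $v$ that is now undominated, add $w$. Then prune greedily to a minimal dominating set $S$. The leaves of $v$ are never pruned, since each is its own private vertex while $v\notin S$. The key counting step is to show $|S|\ge |D|+1$. The loss of $v$ is compensated by at least two leaves, and we must argue that pruning removes fewer vertices than it gains. To do this, I would compare $S$ with $D$ on the components of $G-v$ and use minimality of $D$ there.

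In case (b), I would root the tree at $v$ and look for a structure such as an induced path $P_4$ or $P_5$ through $v$ with non-leaf ends. I would then build two minimal dominating sets differing only locally, in the spirit of the $P_6$ example after \Cref{lem:bipartite_partitions_are_mds}. The final fallback for both cases is an induction on $n$. One removes a leaf together with its support vertex (or a pendant $P_3$) and uses that the smaller tree is either non-Corona, where the induction hypothesis applies and the NGLO is extended by one vertex, or Corona, where the local structure is analysed explicitly. If the bipartition classes $U$ and $V$ happen to have different sizes, \Cref{the:ds_n1_unbal_bipartite} already finishes the proof, so only balanced trees need this work.

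The main obstacle is guaranteeing a strict size gap. Constructing some minimal dominating set avoiding or containing $v$ is easy, but showing it is strictly larger than $\gamma(G)$ requires a careful local comparison. I would attempt the inductive leaf-removal argument first, since it localises the comparison to a bounded neighbourhood of $v$.
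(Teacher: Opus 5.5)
Your Corona direction is the paper's argument. The non-Corona direction, however, is not proved: in both cases you stop at the step that carries the content.

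\textbf{Case (a).} The missing counting step is easy and needs no comparison with $D$ on components. Every dominating set avoiding $v$ must contain all of $L(v)$, since a leaf is dominated only by itself or by $v$. Replacing $L(v)$ by $v$ then yields a dominating set of size $|S|-|L(v)|+1\le |S|-1$. So your pruned $S$, which avoids $v$, is already an NGLO; pruning $V\setminus\{v\}$ would do just as well. The paper does the same concretely: it takes the bipartition class $U\ni v$ and the minimal dominating set $(U\setminus\{v\})\cup L(v)$, of size $|U|+|L(v)|-1$.

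\textbf{Case (b).} This is the real gap. ``Look for an induced $P_4$ or $P_5$'' and ``fallback induction'' name no pair of minimal dominating sets and give no reason why their sizes differ. The idea you are missing is to delete $v$.

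Because $v$ has no leaf neighbour, every component $F_i$ of $G-v$ has at least two vertices. So both bipartition classes $U_i,W_i$ of $F_i$ (with $|U_i|\le|W_i|$) are independent minimal dominating sets of $F_i$ by \Cref{lem:bipartite_partitions_are_mds}. Also, $v$ has exactly one neighbour in each $F_i$.

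You note that \Cref{the:ds_n1_unbal_bipartite} lets you assume $G$ is balanced, but you never use it. Balance makes $|V(G-v)|$ odd, so some $o\ge 1$ components have $|U_i|<|W_i|$. Now build two sets:
\begin{itemize}
\item $D$ takes the smaller class in unbalanced components and the class containing $v$'s neighbour in balanced ones;
\item $D'$ takes the larger class in unbalanced components and the other class in balanced ones.
\end{itemize}
Add $v$ to either set only if it is undominated. Both sets stay minimal, because the classes are independent and $v$ is added only when none of its neighbours is chosen. The parts inside $G-v$ differ in size by at least $o$. Moreover, $v$ can be needed in $D$ only if there are no balanced components, in which case $o=\deg v\ge 2$. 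Hence $|D'|>|D|$.

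Your induction route could also be completed. Strip $v_0,v_1$ from the end of a longest path, where $\deg v_1=2$ once case (a) is excluded. Lift an NGLO of the smaller tree by adding $v_0$. Then handle by hand the case where the smaller tree is Corona with $v_2$ one of its leaves. That case analysis is precisely what is absent from your proposal.

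Finally, $K_1$ is not a Corona tree under the paper's definition. It is therefore an exception to the multimodal direction, not the Corona one.
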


\begin{proof}
    Suppose first that $G$ is a Corona tree, then it is in particular a simplicial-cluster graph. By \Cref{the:ds_n1_simplicial_cluster}, $\CalL$ is equimodal.
    
    Assume now that $G$ is \emph{not} a Corona tree. Then there exists a vertex $v$ such that either
    \begin{enumerate}
        \item $v$ is adjacent to at least two leaves, or
        \item $v$ is a non-leaf vertex with no leaf neighbors.
    \end{enumerate}
    
    \medskip
    \noindent
    \emph{Case 1: There exists a vertex $v \in V$ having at least two leaf neighbors.}
    Let $v$ be a vertex, and let $L(v)=\{\ell_1,\dots,\ell_k\}$ with $k\ge 2$ denote the set of leaf neighbors of $v$. 
    Since $G$ is bipartite, fix a bipartition $(U,V)$ of $G$ such that $v\in U$.
    By \Cref{lem:bipartite_partitions_are_mds}, the set $U$ is a minimal dominating set of~$G$. Every non-leaf neighbor of $v$ has degree at least $2$ and is therefore adjacent to some vertex in $U$ other than $v$. Thus, each such neighbor is dominated by at least one vertex of $U\setminus\{v\}$. Now define
    $$
        D := U \quad\text{and}\quad
        D' := (U \setminus \{v\}) \cup L(v).
    $$
    The set $D'$ is a dominating set of $G$: each leaf $\ell_i$ dominates itself and $v$, while all other vertices remain dominated as in $U$. Moreover, $D'$ is minimal. Indeed, no leaf $\ell_i$ can be removed without leaving $\ell_i$ undominated, and removing any other vertex destroys domination exactly as in $U$.
    
    Since $|D'| = |U| + (k-1)$ with $k\ge 2$, we have $|D'| > |D|$. Hence, $D'$ is a strictly worse local optimum, and $\CalL$ is multimodal.

    \medskip
    \noindent
    \emph{Case 2: There exists a vertex $v \in V$ which is not a leaf and which has no leaf neighbors.}
    In this case, $v$ is adjacent only to non-leaf vertices.
    
    By \Cref{the:ds_n1_unbal_bipartite}, we may assume that $G$ is a balanced bipartite graph, since otherwise the claim already holds. Let $(U,W)$ be a bipartition of $G$ and let $V:=U\cup W$. Remove $v$ from $G$ and consider the forest 
    $F := G - v.$
    Since $G$ is a tree and $v$ is not a leaf, $F$ consists of $f\ge 2$ connected components $F_1,\dots,F_f$, each of which is a tree. For each component $F_i$, let $(U_i,W_i)$ denote its bipartition, labeled such that $|U_i|\le |W_i|$.
    By \Cref{lem:bipartite_partitions_are_mds}, both $U_i$ and $W_i$ are minimal
    dominating sets of $F_i$.
    
    Because $G$ is balanced, $|V|$ is even, and hence $|F|=|V|-1$ is odd. Therefore, at least one component $F_i$ has an odd number of vertices, and for such a component we have $|U_i|<|W_i|$. Call a component \emph{unbalanced} if $|U_i|<|W_i|$, and let $o\ge 1$ denote the number of unbalanced components of $F$.
    
    We now define two sets $D_F$ and $D'_F$ composed of vertices from the forest $F$. For each component $F_i$, the set $D_F$ contains $U_i$ if $F_i$ is unbalanced, and otherwise contains the bipartition class that dominates $v$. Analogously, the set $D'_F$ contains $V_i$ for each unbalanced component $F_i$, and for balanced components contains the bipartition class that does not dominate $v$. From these base sets, we define two dominating sets of $G$: let $D = D_F \cup \{v\}$ if $v$ is not dominated by $D_F$, and $D = D_F$ otherwise. Define $D'$ similarly analogously based on $D'_F$.
    
    Both $D$ and $D'$ dominate $G$. Moreover, both are minimal dominating sets: removing any vertex selected from a component $F_i$ destroys domination in that component, while removing $v$ (if present) leaves $v$ undominated.
    
    We now compare the cardinalities. For each balanced component $F_i$, the contributions to $|D_F|$ and $|D'_F|$ are equal. For each unbalanced component $F_i$, the contribution of $F_i$ to $|D'_F|$ is strictly larger than its contribution to $|D_F|$. Therefore, $|D'_F| \ge |D_F| + o$.
    
    If $v\notin D$, then $|D| = |D_F|$. Using the inequality above, 
    $$|D'| \ge |D'_F| \ge |D_F| + o = |D| + o.$$
    Since $o \ge 1$, we have $|D'|>|D|$.
    
    If $v\in D$, then $v$ is not dominated by $D_F$. This occurs only if $G$ contains no balanced components (as their selection in $D_F$ ensures domination of $v$) and the neighbors of $v$ in all unbalanced components belong to the larger sets $V_i$. Thus, all components are unbalanced, implying $o=f\ge 2$. In this case, 
    $$|D'| \ge |D'_F| \ge |D_F| + o = (|D|-1) + o.$$ 
    Since $o \ge 2$, we have $|D'| \ge |D| + 1$, and again $|D'|>|D|$. Thus, $G$ admits minimal dominating sets of different cardinalities, and the larger one is a non-global local optimum.
\end{proof}

\subsection{Dominating Set and the \flipSwapN}
We will use results from \emph{reconfiguration theory} in order to find results for the \flipSwapN for some elaborate graph classes. For this we need the following definition.

\begin{definition}[Reconfiguration Sequence]
    Let $G$ be a graph on $n$ vertices. Let $A, B$ be two dominating sets of $G$ and $k \in \natnum$. Then we write $A \rightsquigarrow_k B$ if there is a sequence of dominating sets transforming $A$ to $B$ such that two successive sets differ by either adding or removing a vertex and such that the cardinality of any set in the sequence is bounded from above by $k$. We call the corresponding sequence of dominating sets a \emph{reconfiguration sequence}.
\end{definition}

We give the connection to landscapes in the following lemma.

\begin{lemma}\label{lem:reconfiguration_sequence_to_positive_path}
    Let $G$ be a graph on $n$ vertices and $k \in \natnum$. 
    Let $A, B$ be two dominating sets of $G$ with $\lvert A \rvert \ge \lvert B \rvert$ such that $A \rightsquigarrow_k B$. 
    
    If a reconfiguration sequence between $A$ and $B$ exists such that no two token additions happen in direct succession, it holds that $A \reachPlusStar B$ for $(\CalX_2,\flipSwap, \fDS_G)$. 
\end{lemma}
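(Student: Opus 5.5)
The plan is to translate the reconfiguration sequence $A = D_0, D_1, \dots, D_m = B$ step by step into a positive path in $(\CalX_2,\flipSwap,\fDS_G)$. The starting observation is that every $D_i$ is a dominating set, so $\fDS_G(D_i) = |D_i|$. Hence fitness comparisons along the sequence reduce to cardinality comparisons.

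First, I classify the steps. A removal step $D_i \to D_{i+1} = D_i \setminus \{w\}$ is a flip, and it satisfies $\fDS_G(D_{i+1}) = |D_i| - 1 < \fDS_G(D_i)$, so $D_i \reachPlus D_{i+1}$. An addition step $D_i \to D_i \cup \{u\}$ increases fitness and cannot be used on its own. Since no two additions are consecutive, every addition that is not the last step of the sequence is immediately followed by a removal, say of $w$. If $w = u$, then $D_{i+2} = D_i$ and both steps can simply be deleted from the sequence. If $w \neq u$, then $w \in D_i$, $u \notin D_i$ and $D_{i+2} = (D_i \setminus\{w\}) \cup \{u\}$. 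This is exactly an interchange of the bits at positions $u$ and $w$, so $(D_i, D_{i+2}) \in \swap$. Both sets are dominating and have equal cardinality, so $D_i \reachPlus D_{i+2}$. Grouping the sequence into such addition–removal pairs and single removals therefore yields a positive path from $A$ to $B$, provided the last step is not an addition.

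The main obstacle is the case where the final step $D_{m-1} \to D_m = B$ is an addition of some vertex $u$. Here I use $|A| \ge |B|$. Each addition–removal pair preserves cardinality, each single removal lowers it by one, and the final addition raises it by one. Hence there must be at least one unpaired removal; let the last one be at step $j$, removing $w$. After step $j$ the sequence consists only of addition–removal pairs, followed by the final addition of $u$.

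I then modify the tail by postponing the removal of $w$. Instead of passing through $D_{j+1}, \dots, D_{m-1}$, I walk through the supersets $D_t \cup \{w\}$ for $j+1 \le t \le m-1$. These are still dominating sets, and each has constant fitness $|D_j|$. Each later pair (add $u'$, remove $w'$) acts on these supersets in one of two ways. If $u' \neq w$, the pair is still a valid swap between equal-cardinality dominating sets; note that $w' \neq w$ here, because $w$ is absent from every $D_t$ with $t > j$ unless it has just been re-added. If $u' = w$, the addition is vacuous and the pair collapses to a single removal, which decreases fitness and is therefore still allowed. Finally, from $D_{m-1} \cup \{w\}$ I reach $B = D_{m-1} \cup \{u\}$. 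If $u = w$, these sets are already equal. Otherwise I perform one swap of $w$ out and $u$ in, which keeps the cardinality unchanged and yields a dominating set. Concatenating the head path from $A$ to $D_j$ with this modified tail gives $A \reachPlusStar B$.

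The remaining bookkeeping, to be checked carefully rather than expected to be hard, is twofold: the degenerate pairs with $u = w$ must be handled consistently, and every intermediate set in the modified tail must be verified to be dominating (this holds because each is a superset of a dominating set $D_t$). The bound $k$ plays no role in the argument.
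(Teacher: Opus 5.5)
Your tail construction has a genuine gap in the case $u'=w$, where a later pair re-adds the postponed vertex $w$. Note first that $\fDS_G(B)=|D_{m-1}|+1=|D_{j+1}|+1=|D_j|$. Fitness never increases along a positive path, so every set on your tail must have fitness exactly $|D_j|$. Once you sit at a set of fitness $|D_j|-1$, the target $B$ is unreachable.

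Collapsing the pair to a single removal takes you to exactly such a set. Your supporting claim that $w$ is absent from every $D_t$ with $t>j$ ``unless it has just been re-added'' is also false. After the re-addition, $w$ stays in the subsequent $D_t$ until the original sequence removes it again, so $D_t\cup\{w\}=D_t$ there. Two failures follow:
\begin{itemize}
\item A later pair (add $u'\neq w$, remove $w$) turns your step into the pure addition $D_t\to D_t\cup\{u'\}$.
\item If $w$ is never removed again, your final step goes from $D_{m-1}\cup\{w\}=D_{m-1}$ to $B=D_{m-1}\cup\{u\}$, again a pure addition. Your swap of $w$ out and $u$ in lands at $(D_{m-1}\setminus\{w\})\cup\{u\}\neq B$.
\end{itemize}
A minimal instance: in $K_4$ on $\{a,w,x,u\}$, take the sequence $\{a,w,x\},\{a,x\},\{a,w,x\},\{a,w\},\{a,w,u\}$. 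It has no consecutive additions and $j=0$. Your construction reaches $\{a,w\}$, of fitness $2<3=\fDS_G(B)$, and is stuck. The re-addition of $w$ need not be immediate, so this cannot be dismissed by cancelling a removal and re-addition of the same vertex.

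The repair is easy. When a pair adds $w$ and removes some $w'\neq w$, do not move at all, since $D_t\cup\{w\}=D_{t+2}\cup\{w'\}$. Then continue with $w'$ as the postponed vertex. This keeps the invariant that the currently postponed vertex is not in $D_t$, so all tail sets have fitness $|D_j|$ and your final swap is valid.

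More directly, from step $j$ on the sequence reads removal, (addition, removal), \dots, (addition, removal), addition. This regroups into consecutive (removal, addition) blocks, each a swap or the identity between dominating sets of equal size. That regrouping is exactly the paper's argument. The paper pairs each addition with the immediately preceding unpaired removal whenever one exists, and with the following removal otherwise.
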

\begin{proof}
Let $A, B$ be two solutions such that $A \rightsquigarrow_k B$ using TAR and $\lvert A \rvert \ge \lvert B \rvert$.
  
We begin by noting several observations on the relationship between reachability in reconfiguration and fitness neighborhoods.
    \begin{itemize} 
        \item As we use TAR as a reconfiguration rule, it is impossible for two solutions adjacent in the reconfiguration graph to have the same cardinality (and thus, the same fitness): The cardinality must always either increase or decrease by one.
        \item Token removal followed by token addition (and vice versa) corresponds to a swap operator. Therefore, for any two solutions $A$ and $B$, if $A$ can be transformed into $B$ by one token addition and one removal, it holds that $A \reachPlus B$ for the \swapN.
        \item Token removal improves the fitness, as it decreases the parameter to be minimized while maintaining feasibility. Therefore, for any two solutions $A$ and $B$, if $A$ can be transformed into $B$ by one token removal, it holds that $A \reachPlus B$ for the \flipN.
        \item Token addition worsens the fitness, as it increases the parameter to be minimized. Therefore, for any two solutions $A$ and $B$, if $A$ can be transformed into $B$ by one token addition, $A \reachPlus B$ does not hold for the \flipN.
    \end{itemize}  

Now, let $(R_1, \dots, R_n)$ be a reconfiguration sequence between $A$ and $B$ such that no two token additions happen in direct succession. We first note that we also know there are at least as many token removals as additions in $R$, as otherwise, it would violate the assumption that $\lvert A \rvert \ge \lvert B \rvert$.

As noted above, we know that any token addition can be paired up with a token removal and thus be represented by a balanced operator removing one vertex from the set and adding another. Therefore, for any $(R_i, R_{i+1}, R_{i+2})$ such that one token addition and one token removal are performed, we know that $R_i \reachPlus R_{i+2}$ for the \swapN. 

We now argue that it is always possible to pair up a token addition with a token removal. We pair up token additions with token removals from first to last, using the following rule: If there is a token removal directly preceding the token addition that has not been paired with another token addition, pair with that. Otherwise, pair with the token removal directly following the token addition.

We now prove this rule always yields a pairing by contradiction. Assume there was a token addition that could not be paired with a token removal. That specifically means that there was no token removal following it. As we know that no two token additions happen in direct succession, this means that the token addition must have been the final reconfiguration in the reconfiguration sequence. Therefore, it is reconfiguration $(R_{n-1}, R_n)$. It must hold that reconfiguration $(R_{n-2}, R_{n-1})$ is a token removal or does not exist (if $n < 2$), as no two token additions directly succeed each other. Further, $(R_{n-3}, R_{n-2})$ must then be a token addition and paired with $(R_{n-2}, R_{n-1})$, as otherwise, it would have been chosen to be paired with $(R_{n-1}, R_n)$. The same logic can be applied until the first reconfiguration $(R_1, R_2)$ is reached, which must be a token addition (as any token removal would have to be preceded by a token addition with which it is paired). However, this means that one more token addition than token removal was performed (as we both start and end with a token addition and otherwise always alternate between the two), which violates the assumption that there may be no more token additions than removals. Therefore, it must always be possible to pair up a token addition with a removal.

By replacing any pairings of token addition and removal, we have only balanced operators with $R_i \reachPlus R_{i+2}$ for the \swapN and token removals (for which $R_i \reachPlus R_{i+1}$ holds for the \flipN) holds. Therefore, $A \reachPlusStar B$ holds for the \flipSwapN, proving the theorem.
\end{proof}

\citeauthor{haddadan_complexity_2015} \cite{haddadan_complexity_2015} investigate the properties of reconfiguration on dominating sets, introducing the concept of \emph{canonical dominating sets}.

\begin{definition} [Canonical Dominating Set] \label{def:canonical_dominating_set}A minimum dominating set $C$ for a graph $G$ is called a \emph{canonical dominating set} if, for every dominating set $D$, it holds that $D \rightsquigarrow_k C$ with $k=\lvert D \rvert+1$.
\end{definition}

They then prove the existence of canonical dominating sets for trees, cographs and interval graphs. We argue that this implies plateau-unimodality of the landscapes induced by such graphs.

\begin{theorem} \label{the:canonical_dominating_sets}
    Let $G$ be a graph and suppose $G$ contains a canonical dominating set $C$. Regarding the landscape $(\CalX_2,\flipSwap, \fDS_G)$, it holds for all feasible dominating sets $D$ that $D \transReachPlus C$.
\end{theorem}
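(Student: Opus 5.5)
Fix a feasible dominating set $D$. The plan is to derive $D \transReachPlus C$ from \Cref{lem:reconfiguration_sequence_to_positive_path}. Since $C$ is a minimum dominating set, $|D| \ge |C|$. By \Cref{def:canonical_dominating_set} there is a reconfiguration sequence witnessing $D \rightsquigarrow_k C$ with $k = |D|+1$. The lemma also requires that no two token additions occur in direct succession, so the main work is to obtain a sequence with this property.

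The first reduction is to the case where $D$ is a $\subseteq$-minimal dominating set. If $D$ is not minimal, some vertex can be removed while keeping domination. That removal is a flip step that strictly lowers $\fDS_G$, so it is a $\reachPlus$ step. Iterating gives $D \transReachPlus D_0$ with $D_0$ minimal, and by transitivity of $\transReachPlus$ it suffices to show $D_0 \transReachPlus C$. Any case where $D_0 = C$ is then trivial.

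For the minimal case, take a reconfiguration sequence $D = R_1,\dots,R_m = C$ with all $|R_i| \le |D|+1$. I would first \emph{shortcut} it: whenever some $R_j = R_i$ with $j > i$, delete the portion between them. The result is still a valid sequence under the same bound.

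The key step is to rule out two consecutive additions, and here the cardinality bound alone does not do it. Two additions in a row starting from some $R_i$ with $|R_i| \le |D|-1$ stay within $|D|+1$. So I would instead exploit the token-addition-and-removal structure. Suppose $R_i \to R_{i+1} \to R_{i+2}$ are two additions. Each $R_i$ is dominating, so the step to $R_{i+2}$ only adds redundant vertices. I would then reorder operations. A removal that occurs later in the sequence and does not remove one of the added vertices can be commuted earlier, ahead of the second addition. The domination argument is that removing a vertex is only blocked if the vertex is needed, and an addition never hurts domination. Alternatively, if a later removal deletes one of the added vertices, the add and remove cancel and can both be deleted. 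Commuting removals forward in this way preserves feasibility and only decreases intermediate cardinalities, so the bound is kept.

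I expect this rearrangement to be the main obstacle. The worry is that a removal valid after both additions may rely on the second added vertex for domination. If so, it cannot be moved before that addition. In that case it can instead be placed directly after the second addition, and the first addition paired with a removal. To close this gap I would use an induction on the number of consecutive-addition pairs, reducing that count at each step.

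Once a sequence with no two consecutive additions exists, \Cref{lem:reconfiguration_sequence_to_positive_path}, applied with $|D| \ge |C|$, yields $D \transReachPlus C$ directly.
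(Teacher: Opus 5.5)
Your reduction to a minimal $D_0$ and the shortcutting are harmless. The central step, however, has a genuine gap: repairing the given sequence by commuting later removals ahead of additions does not work.

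Moving a removal of $r$ in front of an addition of $x$ requires $S\setminus\{r\}$ to be dominating. You only know that $(S\cup\{x\})\setminus\{r\}$ is dominating, and additions can be exactly what makes $r$ removable. Concretely, suppose the vertices dominated by $r$ and by no other vertex of $R_i$ are exactly $p_1$ and $p_2$. Let $a$ dominate $p_1$ but not $p_2$, and let $b$ dominate $p_2$ but not $p_1$. Then the steps $+a,+b,-r$ are all feasible. Yet each of the orderings $-r$ first, or $+a,-r,+b$, or $+b,-r,+a$, passes through a non-dominating set. So no reordering of these operations avoids two consecutive additions. Your fallback of placing the removal directly after the second addition still leaves add, add, remove. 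The proposed induction on the number of consecutive-addition pairs therefore has no measure that is shown to decrease.

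The missing idea is that you do not need to repair the whole sequence, only a prefix of it. You already noticed that two consecutive additions can only start from a set of size at most $|D|-1$.

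\begin{itemize}
\item Suppose $|D|>|C|$. Cardinalities change by exactly one per step, and the sequence ends at size $|C|\le |D|-1$. So there is a first set $D'$ in the sequence with $|D'|=|D|-1$.
\item Every set strictly before $D'$ has size at least $|D|$. Two consecutive additions in the prefix from $D$ to $D'$ would therefore produce a set of size at least $|D|+2$, violating the bound $|D|+1$.
\item Hence \Cref{lem:reconfiguration_sequence_to_positive_path} gives $D\transReachPlus D'$.
\item Now discard the rest of the sequence and invoke canonicity afresh for $D'$. This yields a new sequence with bound $|D'|+1$, so the argument becomes an induction on $|D|$.
\item In the base case $|D|=|C|$, no set in the sequence has fewer than $|C|$ elements, because $C$ is a minimum dominating set. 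Two consecutive additions would then reach size $|C|+2>|C|+1$, so the lemma applies directly.
\end{itemize}

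This is the paper's proof. It also makes your minimality reduction unnecessary: even a minimal $D_0$ can have a sequence that dips below $|D_0|$ and then adds twice, so that reduction does not address the real difficulty anyway.
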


\begin{proof}
    Let $m= \lvert C \rvert$ be the cardinality of a minimum dominating set for $G$. Let $i \ge m$ and let $D$ be a dominating set of cardinality $i$. We now prove that $D \transReachPlus C$ by induction over the solution cardinality $i$ and using \Cref{lem:reconfiguration_sequence_to_positive_path}.

    Let $i=m$ and let $D$ be any dominating set of cardinality $i$. Then, by the definition of a canonical dominating set, it holds that $D \rightsquigarrow_{i+1} C$. Let $R=(R_1, \dots, R_j)$ be a reconfiguration sequence from $D$ to $C$. 
    
    We prove by contradiction that no two token additions happen in direct succession. Assume two token additions happen in succession. We know that the dominating set before the first of two such addition must have been of cardinality at least $m$, as $m$ is the smallest cardinality possible for a feasible dominating set. Therefore, after the token additions, the cardinality of the current dominating set is strictly greater than $m+1$, which contradicts the condition that no dominating set on the path may have a cardinality greater than $i+1=m+1$. Therefore, no two token addition happen in direct succession.
    Using this and the fact that $\lvert D \rvert = m = \lvert C \rvert$, it holds by \Cref{lem:reconfiguration_sequence_to_positive_path} that $D \transReachPlus C$ for the \flipSwapN.

    Now, let $i \geq m$ be such that, for all dominating sets $D$ of cardinality $i$, $D \transReachPlus C$.
    We now prove that the same holds for all dominating sets of size $i+1$. Let $D$ be a dominating set of cardinality $i+1$. Then, by the definition of a canonical set, it holds that $D \rightsquigarrow_{i+2} C$. Let $R=(R_1, \dots, R_j)$ be a reconfiguration sequence from $D$ to $C$ and let $D'$ be the first dominating set of cardinality $i$ in $R$, which must exist assuming $\lvert D \rvert > m$. Let $R'=(R_1, \dots, R_{j'})$ be the reconfiguration sequence from $D$ to $D'$. 

    We again prove by contradiction that no two token additions happen in direct succession. Assume two token additions happen in succession. We know that the dominating set before the first addition must have been of cardinality greater $i$, as we have not yet reached $D'$, which is the first dominating set of cardinality $i$ on the path. Therefore, after the token additions, the dominating set's cardinality is greater than $i+2$, which contradicts the condition that no dominating set on the path may have a cardinality greater than $i+2$. Therefore, this is not possible.

    Using this and the fact that $\lvert D \rvert = i+1 > i = \lvert D' \rvert$, it holds by \Cref{lem:reconfiguration_sequence_to_positive_path} that $D \transReachPlus D'$ for the \flipSwapN. Further, by the induction hypothesis, we know that $D' \transReachPlus C$, which implies $D \transReachPlus C$, proving the theorem.
\end{proof}

Using this theorem, we can now conclude that cographs, trees and interval graphs are plateau-unimodal.

\begin{definition}[Cograph]
    A \emph{cograph}, or complement-reducible graph, or $P_4$-free graph, is a graph that can be generated from the single-vertex graph $K_1$ by complementation and disjoint union.
\end{definition}

Examples of cographs are complete graphs, complete bipartite graphs and threshold graphs.

\begin{theorem}\label{the:dominating_cograph}
    Let $G$ be a cograph. Then the fitness landscape $(\CalX_2,\flipSwap, \fDS_G)$ is plateau-unimodal.
\end{theorem}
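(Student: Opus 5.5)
The plan is to combine the result of \cite{haddadan_complexity_2015} that every cograph has a canonical dominating set $C$ with \Cref{the:canonical_dominating_sets}, and then handle the infeasible individuals separately. Plateau-unimodality means exactly one local optimum plateau. So I would show two things: every individual $x \in \CalX_2$ satisfies $x \transReachPlus C$, and this forces the unique local optimum plateau to be the plateau containing $C$.

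First, infeasible individuals. If $S$ is not dominating, some vertex $u$ is undominated. Adding $u$ to $S$ is a flip and lowers the fitness by at least $n$. This strictly reduces the number of undominated vertices, so after at most $n$ such flips we reach a dominating set $D$ with $S \transReachPlus D$. By \Cref{the:canonical_dominating_sets}, $D \transReachPlus C$, and transitivity gives $S \transReachPlus C$ for all $S$.

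Second, the reachability property rules out every other local optimum plateau. Let $P$ be a local optimum plateau and $x \in P$, and follow a positive path $x = z_0 \reachPlus z_1 \reachPlus \dots \reachPlus C$. While the fitness stays equal to $f(x)$, the $z_j$ remain connected to $P$ through equal-fitness steps. Since $P$ is the connected set in question (I take plateaus to be maximal, i.e.\ connected components of equal-fitness individuals), these $z_j$ lie in $P$. The first step leaving $P$ would go to some $y \notin P$ with $f(y) \le f(z_j) = f(x)$, contradicting local optimality of $P$. So the whole path stays in $P$, hence $C \in P$.

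Thus every local optimum plateau contains $C$, and maximality makes any two such plateaus equal. For existence, the maximal plateau $P_C$ containing $C$ is itself a local optimum plateau. $C$ is a minimum dominating set, hence a global optimum: any dominating set has fitness at least $|C|$, and any non-dominating set has fitness at least $n+1 > |C|$. Every element of $P_C$ is therefore a global optimum. A neighbor outside $P_C$ cannot have equal fitness by maximality, and cannot have smaller fitness by global optimality, so it is strictly worse.

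The main obstacle is the definitional subtlety in \Cref{def:plateau}. Plateaus are not explicitly required to be maximal, so sub-plateaus of $P_C$ could technically count as further local optimum plateaus. Without maximality, any singleton subset of a larger plateau would also be a plateau and would block the uniqueness count. I would explicitly adopt the maximal-connected-component reading, which is clearly the intended one, and the argument above then goes through cleanly.
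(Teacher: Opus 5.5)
Your proof is correct and follows the paper's route (the canonical dominating set of \cite{haddadan_complexity_2015} combined with \Cref{the:canonical_dominating_sets}), and it also makes explicit two points the paper leaves implicit: non-dominating sets flip-improve to dominating sets, and reachability of $C$ forces a unique local optimum plateau. The maximality issue you flag is not a real obstacle: a proper connected subset of a plateau has an equal-fitness neighbor outside it, so it violates the strict inequality in \Cref{def:plateau}, which means local optimum plateaus are automatically maximal and no special reading is needed.
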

\begin{proof}
    From~\cite[Lemma~5]{haddadan_complexity_2015} we know that any cograph admits a canonical dominating set $C$. By  \Cref{the:canonical_dominating_sets}, it follows that for any dominating set $D$, $D \transReachPlus C$ in the \flipSwapN. Thus, no NGLOPs exist and, for all global optima, a positive path between them exists.
\end{proof}

\begin{theorem}\label{the:dominating_tree}
    Let $G$ be a tree. Then the fitness landscape $(\CalX_2,\flipSwap, \fDS_G)$ is plateau-unimodal.
\end{theorem}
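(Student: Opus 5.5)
The plan is to follow the cograph case verbatim, replacing the cited lemma. \citeauthor{haddadan_complexity_2015}~\cite{haddadan_complexity_2015} show that every tree admits a canonical dominating set $C$ in the sense of \Cref{def:canonical_dominating_set}. That is, $C$ is a minimum dominating set, and every dominating set $D$ satisfies $D \rightsquigarrow_{|D|+1} C$. I would quote that result and then apply \Cref{the:canonical_dominating_sets}, which gives $D \transReachPlus C$ in $(\CalX_2,\flipSwap,\fDS_G)$ for every feasible dominating set $D$.

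It remains to check that this yields exactly one local optimum plateau. This step was left implicit for cographs, so I would spell it out here.

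\emph{Infeasible points.} For an infeasible $x$, adding an undominated vertex is a flip that strictly decreases the fitness, as in the proof of \Cref{lem:lo_equals_mds}. Repeating such flips gives a positive path $x \transReachPlus D$ to some feasible $D$. Combining this with the previous step, every point of the landscape has a positive path to $C$.

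\emph{Plateaus of optimal fitness.} Let $P$ be a local optimum plateau and $x \in P$. Along a positive path from $x$ to $C$, the fitness never increases. The first step must stay inside $P$, since leaving $P$ would require a strictly larger fitness. Inductively, the whole path stays in $P$, so $C \in P$. Hence every local optimum plateau contains $C$ and has the optimal fitness $|C|$.

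\emph{Uniqueness.} Consider the plateau formed by the connected component of $C$ among points of fitness $|C|$. Every point of fitness $|C|$ is a minimum dominating set. By the argument above it reaches $C$ by a path of fitness $|C|$ throughout, so this plateau contains all global optima. No neighbor outside it has fitness $\le |C|$, so it is a local optimum plateau, and it is the only one.

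The main obstacle is only bibliographic: locating the precise statement in \cite{haddadan_complexity_2015} that trees admit canonical dominating sets, and checking that their reconfiguration rule (TAR with threshold $|D|+1$) matches our \Cref{def:canonical_dominating_set}. If their tree result is phrased differently, for example via a specific greedy construction of $C$ by repeatedly taking the parent of a deepest leaf, I would rebuild the reconfiguration sequence myself. The idea is that any dominating set can be pushed leaf-to-parent by swaps, which never increase the size, until it coincides with that canonical greedy set.
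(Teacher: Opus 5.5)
Your proposal is correct and is essentially the paper's proof. The paper cites exactly this fact (\cite[Lemma~7]{haddadan_complexity_2015}: every tree has a canonical dominating set) and then applies \Cref{the:canonical_dominating_sets}. Your extra bookkeeping validly expands the single sentence the paper uses to conclude plateau-unimodality: infeasible points improve by adding an undominated vertex, every local optimum plateau must contain $C$, and so the fitness-$|C|$ component of $C$ is the unique one, since a local optimum plateau is closed under equal-fitness neighbours.
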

\begin{proof}
    From~\cite[Lemma~7]{haddadan_complexity_2015} we know that any tree admits a canonical dominating set $C$. By  \Cref{the:canonical_dominating_sets}, it follows that, for any dominating set $D$, $D \transReachPlus C$ in the \flipSwapN. Hence, no NGLOPs exist and, for all global optima, a positive path between them exists.
\end{proof}

\begin{definition}[Interval Graph]
    A graph $G=(V, E)$ with $V = \{v_1, v_2, \dots , v_n\}$ is an \emph{interval graph} if there exists a set $I$ of (closed) intervals $I_1, I_2, \dots , I_n \subseteq \realnum$ such that, for all $i,j \in [n]$, $\{v_i, v_j\} \in E$ if and only if $I_i \cap I_j \neq \emptyset$. 
\end{definition}

\begin{theorem}\label{the:dominating_interval}
    If $G$ is an interval graph, then the fitness landscape $(\CalX_2,\flipSwap, \fDS_G)$ is plateau-unimodal.
\end{theorem}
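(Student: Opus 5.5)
The plan mirrors the proofs of \Cref{the:dominating_cograph} and \Cref{the:dominating_tree}. The first step is to invoke the result of \citeauthor{haddadan_complexity_2015}~\cite{haddadan_complexity_2015} that every interval graph admits a canonical dominating set $C$ in the sense of \Cref{def:canonical_dominating_set}. This means that for every dominating set $D$ we have $D \rightsquigarrow_{|D|+1} C$. We cite their corresponding lemma for interval graphs; the lemma number should be checked against the paper. With this in hand, \Cref{the:canonical_dominating_sets} applies directly and gives $D \transReachPlus C$ in $(\CalX_2,\flipSwap,\fDS_G)$ for every feasible dominating set $D$.

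The second step converts this reachability into plateau-unimodality. We must also handle infeasible individuals, since \Cref{the:canonical_dominating_sets} only speaks about dominating sets. For an infeasible $x$, adding an undominated vertex is a single flip that strictly decreases $\fDS_G$, by at least $n$. Iterating reaches a feasible set, and each step is a positive step, so every $x \in \CalX_2$ satisfies $x \transReachPlus C$.

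From this, the third step rules out an NGLOP. Let $P$ be any local optimum plateau and $x\in P$. Follow a positive path from $x$ to $C$. Every step on this path leads either to an individual of equal fitness or to one of strictly smaller fitness. Since $P$ is a local optimum plateau, no neighbor outside $P$ has fitness at most $f(x)$, so by induction along the path every individual on it stays in $P$. In particular $C\in P$, so $P$ has the global optimum fitness.

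The last step shows that $P$ is the unique local optimum plateau. Any two local optimum plateaus both contain $C$. Since plateaus are defined as connected equal-fitness sets, two local optimum plateaus sharing an element $C$ must coincide: if they were different, some element of one would be an equal-fitness neighbor of an element of the other lying outside it, contradicting local optimality. I would state this carefully, taking the maximal connected component of $C$ among equal-fitness individuals as the plateau. The main obstacle is this bookkeeping about the precise meaning of ``exactly one local optimum plateau''. The substantive content is entirely carried by the cited existence of canonical dominating sets and \Cref{the:canonical_dominating_sets}.
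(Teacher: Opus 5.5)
Your proposal is correct and follows the same route as the paper, which likewise cites \cite[Lemma~11]{haddadan_complexity_2015} for the existence of a canonical dominating set in interval graphs and then applies \Cref{the:canonical_dominating_sets}. Your extra steps just make explicit what the paper compresses into ``no NGLOPs exist and all global optima are joined by positive paths'': sending infeasible individuals to a dominating set by adding undominated vertices, and showing that every local optimum plateau contains $C$ and therefore all such plateaus coincide.
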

\begin{proof}
    From~\cite[Lemma~11]{haddadan_complexity_2015} we know that any interval graph admits a canonical dominating set $C$. By  \Cref{the:canonical_dominating_sets}, it follows that, for any dominating set $D$, $D \transReachPlus C$ in the \flipSwapN. It follows that no NGLOPs exist and, for all global optima, a positive path between them exists.
\end{proof}

We now turn to negative results for the \flipSwapN; note that these are not connected to reconfiguration theory.

We have seen that the class of all cographs is plateau-unimodal for the \flipSwapN. The following example shows that it is not unimodal, also for trees and interval graphs.

\begin{theorem}\label{the:DS_negative_flipswap}
    Let $k \in \natnum_+$. For the graph $G = P_{3k+2}$, which is a tree, a cograph, and an interval graph, we have that $(\CalX_2, \flipSwap, \fDS_G)$ is not unimodal.
\end{theorem}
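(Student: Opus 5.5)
The plan is to exhibit two distinct global optima and then argue from the definitions that this rules out unimodality. Number the vertices of $G=P_{3k+2}$ as $1,\dots,3k+2$ along the path. I do not expect to need any earlier result beyond the definitions in \Cref{sec:locOpt}.

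\emph{Step 1: two distinct minimum dominating sets.} Each vertex dominates at most three vertices: itself and its at most two neighbours. Hence every dominating set has size at least $\lceil (3k+2)/3\rceil = k+1$. Since the penalty $n+1$ exceeds any cardinality gain, global optima of $\fDS_G$ are exactly the minimum dominating sets. Consider
\[
A=\{3i+2 : 0\le i\le k\}, \qquad B=\{3i+1 : 0\le i\le k\}.
\]
In $A$, vertex $3i+2$ dominates $3i+1,3i+2,3i+3$, so $A$ dominates $1,\dots,3k+3$ restricted to $V$. In $B$, vertex $3i+1$ dominates $3i,3i+1,3i+2$, and these cover every vertex including $3k+2$. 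Both sets have size $k+1$, so both are global optima, and clearly $A\neq B$.

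\emph{Step 2: each global optimum lies in a local optimum plateau.} Take a global optimum $x$. Let $P$ be the set of all individuals reachable from $x$ via neighbours of equal fitness, which is a maximal plateau. Any $y\notin P$ that is a neighbour of some element of $P$ cannot have equal fitness, because maximality would then put it in $P$. Since the common value is the global minimum, $y$ must be strictly worse. So $P$ is a local optimum plateau.

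\emph{Step 3: conclude.} Let $P_A$ and $P_B$ be the maximal plateaus containing $A$ and $B$. If $P_A\neq P_B$, there are two local optimum plateaus. If $P_A=P_B$, that plateau contains $A\neq B$ and so has size at least $2$. In either case the landscape violates the definition of unimodal in \Cref{def:modal}.

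The only step needing any care is the lower bound $\gamma(P_{3k+2})\ge k+1$, which ensures that $A$ and $B$ really are global optima. The counting argument above settles it directly.
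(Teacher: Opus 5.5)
Your proof is correct and follows the paper's approach of exhibiting the two shifted period-$3$ sets $\{3i+1\}$ and $\{3i+2\}$ and concluding that the landscape cannot be unimodal. Your version is more careful than the paper's in two ways. You take $0\le i\le k$, whereas with the paper's $i\in[k]$ the sets have only $k$ elements and leave vertices $1,2$ undominated. You also justify, via $\gamma(P_{3k+2})\ge k+1$ and the case split on whether the two optima share a plateau, why two distinct optima rule out unimodality in the Flip/Swap neighborhood; there they in fact lie on a common plateau of size at least $2$.
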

\begin{proof}
    Consider the graph $P_{3k+2} = (V,E)$ with $V = \{1,\ldots,3k+2\}$.

    Consider the vertex sets $D_1 := \set{3i+1}{i \in [k]}$ and $D_2 := \set{3i+2}{i \in [k]}$. Note that both $D_1$ and $D_2$ are minimal dominating sets. Since $D_1 \neq D_2$, $(\CalX_2, \flip, \fDS_{P_{3k+2}})$ is not unimodal.
\end{proof}

Finally, we give an example graph class which is multimodal for the \flipSwapN.

\begin{definition}[$n$-bouquet]
    An \emph{$n$-bouquet of $6$-cycles} is a graph constructed by joining $n \ge 2$ copies of the cycle graph $C_6$ at a single shared vertex $v_0$, i.e., by performing a 1-clique-sum of these cycles.  
    For the $i$-th cycle, we denote the remaining vertices by $v_{i,1}, v_{i,2}, v_{i,3}, v_{i,4}, v_{i,5}$, where $v_{i,1}$ and $v_{i,5}$ are adjacent to the shared vertex $v_0$, and consecutive vertices are adjacent in the natural order of the cycle.
\end{definition}
\begin{theorem}\label{the:ds_n2_C6_bouquet}
    Let $G=(V,E)$ be an \emph{$n$-bouquet of $6$-cycles}. Then the landscape $\CalL=(\CalX_, \flipSwap, \fDS_G)$ is multimodal.
\end{theorem}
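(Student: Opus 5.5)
The plan is to exhibit an explicit non-global local optimum plateau. First we pin down the global optimum value, then we take a plateau of dominating sets of size $2n$ that avoid the shared vertex $v_0$, and show that no flip or swap leaves it without strictly worsening the fitness. For brevity, write the $i$-th "petal" as the path $v_{i,1} v_{i,2} v_{i,3} v_{i,4} v_{i,5}$.

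\emph{Step 1 (the optimum).} The set $\{v_0\} \cup \{v_{i,3} \mid i \in [n]\}$ is dominating and has size $n+1$. Conversely, every petal must contain a vertex of $\{v_{i,2},v_{i,3},v_{i,4}\}$, since otherwise $v_{i,3}$ is undominated. If $v_0$ is not chosen, each petal must dominate the whole path $P_5$ on its own, which needs two vertices. Hence the domination number is exactly $n+1$, and since $n \ge 2$ any dominating set of size $2n$ is not globally optimal.

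\emph{Step 2 (the plateau).} Let $P$ be the plateau (connected component of equal fitness $2n$ in $(\CalX_2,\flipSwap,\fDS_G)$) containing $D=\{v_{i,1},v_{i,4} \mid i\in[n]\}$, which is easily checked to be dominating. We prove by induction along swaps that every $S\in P$ satisfies three properties: $v_0\notin S$; $S$ contains exactly two vertices of each petal; and each petal pair dominates its $P_5$ alone. The key observation is that any such pair avoids $v_{i,3}$. Indeed, $v_{i,1}$ needs $v_{i,1}$ or $v_{i,2}$ in the pair, $v_{i,5}$ needs $v_{i,4}$ or $v_{i,5}$, and these two requirements already use both slots.

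Now consider a swap from $S$ that stays dominating. If it moves a vertex from petal $i$ to another petal, then petal $i$ keeps one vertex and no $v_0$, so some vertex of petal $i$ is undominated. If it removes a vertex from petal $i$ and adds $v_0$, the remaining single vertex would have to dominate $v_{i,2},v_{i,3},v_{i,4}$, so it must be $v_{i,3}$; this is impossible because $v_{i,3}$ never occurs. Hence the only fitness-preserving swaps stay inside a petal, which preserves the invariant. Every other swap produces an undominated vertex, so the fitness rises by at least $n$.

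\emph{Step 3 (no improving neighbor).} From any $S\in P$, adding a vertex gives fitness $2n+1$. Removing a vertex leaves some petal with one vertex and no $v_0$, hence an undominated vertex and a large penalty. Swaps are handled by Step 2. Thus every neighbor outside $P$ has strictly larger fitness, so $P$ is an NGLOP by Step 1, and $\CalL$ is multimodal.

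The main obstacle is Step 2: correctly characterizing the plateau. In particular we must rule out reaching $v_0$ through a chain of equal-fitness swaps, which is exactly where the "no $v_{i,3}$" observation does the work.
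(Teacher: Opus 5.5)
Your proof is correct and follows essentially the same route as the paper, which also pairs the size-$(n+1)$ set $\{v_0\}\cup\{v_{i,3} \mid i\in[n]\}$ with a plateau of size-$2n$ dominating sets avoiding $v_0$ (built from $\{v_{i,2},v_{i,5}\}$, which lies on the same plateau as your $D$) that is closed under intra-petal swaps, and rules out swapping in $v_0$ or $v_{i,3}$. Your explicit invariant (no $v_0$, and one vertex from $\{v_{i,1},v_{i,2}\}$ plus one from $\{v_{i,4},v_{i,5}\}$ in each petal) proves the closure of the plateau more cleanly than the paper's case listing; you show non-globality by computing the domination number, whereas the paper instead argues that $\{v_0\}\cup\{v_{i,3}\}$ is itself a local optimum.
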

\begin{proof}

    Consider the solution 
    $$S = \{ v_0 \} \cup \{ v_{i,3} \mid i = 1, \dots, n \},$$
    so that $|S| = n+1$.  
    The set $S$ is a dominating set: the shared vertex $v_0$ dominates all $v_{i,1}$ and $v_{i,5}$, while each $v_{i,3}$ dominates $v_{i,2}$, $v_{i,3}$, and $v_{i,4}$ of its respective cycle. Notice that every vertex in the graph is dominated by exactly one vertex in $S$, so the domination regions do not overlap.  
    
    As a consequence, removing any vertex from $S$ would destroy the domination property, and no vertex can be swapped in to replace an existing one, since no other vertex fully dominates the region of any $v_{i,3}$ or of $v_0$. Therefore, $S$ is a local optimum of size $n+1$.

    Now consider the solution
    $$S' = \{ v_{i,2}, v_{i,5} \mid i = 1, \dots, n \},$$
    so that $|S'| = 2n$. 

    The set $S'$ is a dominating set: for every cycle $i$, the vertices $v_{i,1}$ and $v_{i,3}$ are dominated by $v_{i,2}$, while $v_{i,4}$ and the shared vertex $v_0$ are dominated by $v_{i,5}$. Hence all vertices of the graph are dominated.

    For every cycle, it is possible to exchange $v_{i,5}$ for $v_{i,4}$ and $v_{i,2}$ for $v_{i,1}$. These exchanges work in both directions. However, for each cycle, at least one of $\{v_{i,2}, v_{i,1}\}$ must be contained in the set in order to dominate $v_{i,3}$, and in at least one cycle one of $\{v_{i,5}, v_{i,4}\}$ must be contained in the set to dominate $v_0$.

    It is never possible to swap a vertex for $v_0$. 
    Swapping $v_{i,1}$ would leave $v_{i,2}$ undominated, since in this configuration $v_{i,4}$ must be the second selected vertex of the cycle. 
    Swapping $v_{i,2}$ would leave $v_{i,2}$ itself undominated, as $v_{i,4}$ or $v_{i,5}$ must then be the second selected vertex in the cycle. 
    By symmetry, the same holds for $v_{i,4}$ and $v_{i,5}$.
    
    It is also never possible to swap a vertex for $v_{i,3}$. 
    Swapping $v_{i,1}$ or $v_{i,2}$ would leave $v_{i,1}$ undominated, and by symmetry the same argument applies to $v_{i,4}$ and $v_{i,5}$.
    
    Finally, removing any vertex from $S'$ leaves that vertex itself undominated, since in this configuration no two selected vertices are adjacent. Thus, $S'$ lies on a local optimum plateau.
    
    Since $n \ge 2$, we have $|S| = n+1 < 2n = |S'|$. Thus, $\CalL$ contains local optima with different fitness and is therefore multimodal.
\end{proof}

\section{Vertex Coloring}
\label{sec:vertexColoring}

In this section we consider the vertex coloring problem with the \flipN, then with the \flipSwapN. For this, we define the fitness as follows.

\begin{definition}
    Given a graph $G = (V,E)$ on $n = |V|$ vertices, we fix an ordering of the vertices and identify each vertex with its index from $[n]$. Any string $x \in \CalX_n$ is a coloring of the vertices of $G$. We call a coloring proper if, for all $i,j \in V$ with $\{i,j\} \in E$ we have $x_i \neq x_j$. We let $\fCol_G$ be the fitness function induced by $G$ regarding the vertex coloring problem, defined as the number of colors used with a penalty terms for each monochromatic edge as follows.
    $$
    \fCol_G\colon \CalX_n \rightarrow \realnum_{\geq 0}, x \mapsto |\set{x_i}{i\in [n]}| + (n+1) \cdot |\set{\{i,j\} \in E}{x_i = x_j}|.
    $$
\end{definition}

Note that, for any infeasible coloring, recoloring any vertex which is part of a monochromatic edge with a color that does not currently appear in the coloring will improve fitness. Thus, there are no local optima among infeasible solutions, and we will focus on feasible solutions.

\subsection{\flipN}

We first make the following observation.

\begin{proposition} \label{prop:uni_equi}
    On connected bipartite graphs, any fitness landscape that is equimodal is also unimodal, assuming that two optimal colorings are considered the same if they differ only in the naming of colors. 
\end{proposition}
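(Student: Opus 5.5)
Since the landscape is equimodal, every local optimum plateau contains only globally optimal individuals, so it suffices to understand the set of globally optimal colorings. I will show that, up to renaming colors, there is exactly one such coloring, and that it forms a single local optimum plateau of size one under this identification.

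\textbf{Step 1: the optimal colorings.} A connected bipartite graph with at least one edge has chromatic number $2$; the trivial one-vertex graph has chromatic number $1$ and only one coloring up to renaming. A proper $2$-coloring of a connected graph is determined by the color of a single vertex $v$. Fix a BFS tree from $v$: every other vertex must receive the color of $v$ if it is at even distance and the other color if it is at odd distance. This is consistent because the graph is bipartite. Any other proper $2$-coloring therefore differs only in which two color values are used and which is assigned to $v$, i.e., by a renaming of colors. Hence all global optima, with fitness $2$, form one equivalence class. Infeasible colorings are never local optima, by the remark preceding the subsection.

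\textbf{Step 2: conclude.} By equimodality, every element of every local optimum plateau has the globally optimal fitness $2$, and is therefore the unique optimal coloring up to renaming. So, modulo renaming, there is exactly one local optimum plateau, and it consists of a single element. This is unimodality in the sense of \Cref{def:modal} under the stated identification.

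\textbf{Main obstacle.} The delicate point is reconciling ``exactly one local optimum plateau'' with the raw landscape. The optimal colorings with different color pairs $\{a,b\}$ are distinct strings. They may lie in different plateaus, or in none, if flips connect them through equal-fitness colorings, although any single recoloring of a $2$-coloring of a graph with an edge either creates a monochromatic edge or introduces a third color, so each optimum is in fact an isolated strict local optimum. I would state explicitly that after quotienting by color renaming these isolated optima collapse to one point. Since renaming preserves fitness and commutes with $\flip$, the quotient landscape is well defined, and there the single class is the unique local optimum plateau, of size $1$.
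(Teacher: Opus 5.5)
Your proof is correct and follows the paper's route: the paper simply observes that a connected bipartite graph has exactly one proper $2$-coloring up to renaming. Your BFS argument and quotient-landscape formulation make that precise, and the quotient argument also applies verbatim to $\flipSwap$ (where the paper uses the proposition too), since renaming commutes with swaps as well as flips. One aside is false but harmless: optima need not be isolated strict local optima. In a star $K_{1,m}$ with $m\ge 2$, recoloring the center to an unused color keeps exactly two colors and stays proper; and under $\flipSwap$ the two $2$-colorings of $K_2$ are swap-neighbors. Such optimal plateaus still collapse to a single class in your quotient, so your conclusion is unaffected.
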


This follows from the fact that for connected bipartite graphs, there is exactly one feasible $2$-coloring, namely each partition colored in a unique color. While the names of those colors might differ between colors (one coloring might use red and green, the other blue and red), they are still essentially the same coloring and we therefore consider them the same for the sake of unimodality.

We now seek to further characterize the fitness landscape for coloring, focusing on the family of bipartite graphs. Here, we identify one additional unimodal case as well as two multimodal cases. We start with a definition.

\begin{definition} [Universal Bipartite Graphs] \label{def_universal}
    Let $G=(V, E)$ be a bipartite graph with partitions $U \uplus W = V$. We call $u \in U$ \emph{universal to the opposite partition} if, for all $w \in W$, it holds that $\{u, w\} \in E$. Analogously we define universality of $w \in W$. A bipartite graph with at least one vertex universal to the opposite partition is called a \emph{universal bipartite graph}.
\end{definition}

Next we state that the existence of a vertex that is universal to the opposite partition implies that the fitness landscape $(\CalX_n, \flip, \fCol_G)$ is unimodal.

\begin{theorem} \label{the:universal}
   For a universal bipartite graph $G$, the landscape $(\CalX_n, \flip, \fCol_G)$ is unimodal.
\end{theorem}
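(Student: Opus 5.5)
The plan is to show that from every proper coloring there is a positive path in $(\CalX_n,\flip,\fCol_G)$ to the (essentially unique) proper $2$-coloring of $G$. This rules out NGLOPs, and \Cref{prop:uni_equi} then turns equimodality into unimodality. Infeasible colorings need no separate treatment: by the remark after the definition of $\fCol_G$, recoloring an endpoint of a monochromatic edge with an unused color strictly improves fitness, so no infeasible coloring lies on a local optimum plateau.

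Let $G=(U\uplus W,E)$ and let $u\in U$ be universal to $W$. Take any proper coloring $x$ and let $c_u=x_u$. Since $u$ is adjacent to every $w\in W$, the color $c_u$ appears nowhere on $W$.

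\emph{Phase 1.} Recolor the vertices of $U\setminus\{u\}$ to $c_u$ one at a time. Each such vertex has neighbors only in $W$, none of which carries $c_u$, so every intermediate coloring stays proper. The set of used colors never gains a color, since $c_u$ is already used, and it may lose one. Hence each step is a $\reachPlus$ step.

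\emph{Phase 2.} Now all of $U$ has color $c_u$. Fix any $w_0\in W$ with color $c_w=x_{w_0}\neq c_u$, and recolor every other vertex of $W$ to $c_w$. These steps stay proper because all neighbors lie in $U$ and have color $c_u\neq c_w$, and again no new color appears. The result is the $2$-coloring, so $x\transReachPlus y$ with $\fCol_G(y)=2$.

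Now suppose $P$ is a local optimum plateau containing some $x$ with $\fCol_G(x)\ge 3$. The positive path from $x$ to $y$ must at some point strictly decrease fitness. Its prefix up to that point consists of flip steps of equal fitness, starting in $P$. Follow this prefix. Any step from an element of $P$ to a point outside $P$ has fitness $\le$ that of the element, which contradicts the local optimum plateau condition. So the whole prefix stays in $P$, and the strict-decrease step then leaves $P$ with smaller fitness, again a contradiction. Therefore every local optimum plateau consists of $2$-colorings (for $G$ connected with at least one edge, at least $2$ colors are needed).

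Finally, I need that these optima form a single plateau of size $1$ up to renaming. Any flip from a proper $2$-coloring of a connected bipartite graph either introduces a third color or creates a monochromatic edge, so it strictly worsens fitness. Each $2$-coloring is therefore an isolated local optimum, and all of them coincide up to renaming of colors, as noted after \Cref{prop:uni_equi}.

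The main obstacle is the bookkeeping in the plateau argument: plateaus are not assumed maximal, so I must argue carefully that equal-fitness steps cannot leave a local optimum plateau. The explicit two-phase recoloring, which is where universality is used, is routine once observed.
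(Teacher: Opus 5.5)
Your proposal is correct and follows the paper's proof. It uses the same two-phase recoloring: first all of $U$ to $x_u$, which is absent from $W$ by universality, then all of $W$ to a single color, and then \Cref{prop:uni_equi}. Your explicit argument that a positive path containing a strict decrease rules out membership in any local optimum plateau is more careful than the paper's closing line, which only says that $x$ is not a local optimum.

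One small correction concerns your claim that every flip from a proper $2$-coloring strictly worsens fitness. This fails when one side of the bipartition is a single vertex, e.g.\ the star $K_{1,m}$ with $m\ge 2$, which is universal bipartite. Recoloring the center with an unused color keeps the coloring proper and still uses two colors, so $2$-colorings need not be isolated. Since such neighbors are mere renamings, your conclusion via the renaming convention still holds. Simply drop the isolation claim.
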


\begin{proof}
Let $G=(V,E)$ be a bipartite graph with partitions $U \uplus W = V$.
Without loss of generality, let $u \in U$ be a vertex that is universal to the opposite partition. 

Let $x$ be a feasible coloring that contains at least three colors. We now prove that $x$ is not part of an NGLOP by showing the existence of a positive path from $x$ to a coloring containing only two colors. In doing so, we prove that $(\CalX_n, \flip, \fCol_G)$ is equimodal, making it unimodal by \Cref{prop:uni_equi}.

As $x$ is feasible, we know that it contains no monochromatic edges. Therefore, for any given vertex $w \in W$, we have $x_u \ne x_w$ as we know that $\{u, w\}\in E$. This means the color $x_u$ does not appear anywhere in the partition $W$.

We now first recolor, one by one starting from $x$, the vertices in $U$ to the color $x_u$, resulting in a coloring $y$. Since all vertices of $W$ do not have the color $x_u$, all intermediate colorings and $y$ are feasible, and we have $x \transReachPlus y$.

Let $c$ be a color used by $y$ in $W$. We note that this color does not appear in $U$, which is now colored in a single color. Now we recolor, one by one starting from $y$, the vertices in $W$ to the color $c$, resulting in a coloring $z$. Since all vertices of $U$ are colored $x_u \neq c$, all intermediate colorings and $z$ are feasible, and we have $y \transReachPlus z$. Using transitivity, we have a positive path from $x$ to a feasible $2$-coloring. Since $x$ had more than $2$ colors, it is not a local optimum.

\end{proof}

Now, we examine structural properties that allow the existence of NGLOPs, meaning the fitness landscape $(\CalX_n, \flip, \fCol_G)$ is multimodal. 

First, we regard $G = C_{6k}$. We find the fitness landscape $(\CalX_n, \flip, \fCol_G)$  to contain NGLOs, making it multimodal. Further, we define conditions for a graph $G$ containing the $C_{6k}$ as a subgraph that lead to $(\CalX_n, \flip, \fCol_G)$ containing NGLOPs, making it likewise multimodal.

We examine a situation in which an NGLOP exists at three colors. We find that the $C_{6k}$ contains NGLOPs at three colors. In order to describe a graph configuration with the $C_{6k}$ as an induced subgraph that likewise contains NGLOPs, we first give necessary definitions.

We refer to vertices in the $C_{6k}$ as $v_1, \dots, v_{6k}$, with the numbering corresponding to their order in the circle.

\begin{definition} [$3$-chordless $C_{6k}$-subgraph] \label{def:3co}
    Let $G=(V, E)$ be a graph and $v_1, v_2, \dots v_{6k} \in V$ a $C_{6k}$-subgraph in $G$. We call $v_1, v_2, \dots v_{6k} \in V$ \emph{$3$-chordless} if, for all $j \le 6k$, there exists no $l \le 6k$ such that $l \equiv j \pmod 3$ and $\{v_j, v_l \}\in E$.
\end{definition}

Further, we define the term \emph{$d$-opposing vertices} to describe a way vertices outside of the $C_{6k}$ can be connected to vertices within.

\begin{definition} [$d$-opposing vertices]
   Let $G=(V, E)$ be a graph and let $v_1, \dots v_{6k} \in V$ be a $C_{6k}$-subgraph. Let $v \in V$ be a vertex that is not part of the $C_{6k}$ and let $d \in \{ 0, 1, 2 \}$. We call $v$ \emph{$d$-opposing} if, for all $j\le 6k$ such that $j \equiv d \pmod 3$, no edge exists between $v$ and $v_j$.
\end{definition}

Next we prove that a bipartite graph contains an NGLOP at three colors provided that it contains a $3$-chordless $C_{6k}$-subgraph and certain criteria for the other vertices are met. To prove this, we first show that the $C_{6k}$ is multimodal with an NGLO at three colors for the fitness landscape $(\CalX_n, \flip, \fCol_G)$ and then prove that such a coloring of the $C_{6k}$-subgraph is likewise possible in a graph meeting the given criteria.

\begin{theorem} \label{the:3-circle}
    Let $G=(V,E)$ be a bipartite graph such that 
    \begin{enumerate}
    \item it contains a $3$-chordless $C_{6k}$-subgraph,
    \item for all $v \in V$ outside of the $C_{6k}$-subgraph, there exists a $d \in \{0,1,2 \}$ such that $v$ is $d$-opposing and
    \item if two vertices $u, v \in V$ are both $d$-opposing, then $\{u,v\} \notin E$.
\end{enumerate}
Then the fitness landscape $(\CalX_n, \flip, \fCol_G)$ for $G$ is multimodal with NGLOPs at three colors.
\end{theorem}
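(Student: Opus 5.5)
We construct an explicit feasible $3$-coloring $x$ and show it is a strict local optimum under $\flip$, i.e.\ every single recoloring is strictly worse. Since $G$ is bipartite, the global optimum uses two colors (one if $E=\emptyset$). So $x$, taken as a singleton, is an NGLOP at three colors, and the landscape is multimodal.

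\emph{Construction.} Color the cycle by residues: $x_{v_j} = j \bmod 3$. Because $6k$ is divisible by $3$, the wrap-around edge $\{v_{6k},v_1\}$ joins residues $0$ and $1$, so the coloring is proper on the cycle edges. By condition 1 ($3$-chordless), no edge joins two cycle vertices of the same residue, so chords of the cycle are also bichromatic. Every vertex $v$ outside the cycle is $d$-opposing for some $d$ by condition 2; fix one such $d$ and set $x_v = d$. Then $v$ has no neighbor on the cycle with color $d$. For two adjacent outside vertices, condition 3 forbids them from being $d$-opposing for the same $d$. This needs care: a vertex could be $d$-opposing for several $d$. We therefore read condition 3 as applying to the chosen $d$, or note that it applies to any common $d$, so their chosen colors differ. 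Hence $x$ is proper and uses exactly the three colors $0,1,2$, and $\fCol_G(x)=3$.

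\emph{Local optimality.} Consider any flip that recolors a single vertex $w$.
\begin{itemize}
\item If $w$ receives a color outside $\{0,1,2\}$, or keeps a color class nonempty while adding none, the color count cannot drop. The count drops only if $w$ is the unique vertex of its color, which never happens: each color class contains at least $2k\ge 2$ cycle vertices. So a feasible neighbor has at least $3$ colors, or $4$ if a new color is introduced.
\item If $w$ is recolored to another color among $\{0,1,2\}$, we need a monochromatic edge, since that makes the fitness at least $3+(n+1)>3$.
\end{itemize}
For $w=v_j$ on the cycle, its two cycle neighbors $v_{j-1},v_{j+1}$ carry exactly the two other colors, so any recoloring within $\{0,1,2\}$ creates a conflict. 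Recoloring to a fresh color gives $4$ colors. Thus every neighbor of a cycle vertex is strictly worse.

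\emph{Main obstacle.} The hard case is an outside vertex $w$ colored $d$. Recoloring $w$ to some $d'\neq d$ is strictly worse only if $w$ has a neighbor of color $d'$. The color count stays $3$, because $w$ is not alone in its class. So even with no conflict, fitness stays equal, and $x$ lies on a plateau rather than being a strict optimum. I would therefore argue at the plateau level. Let $P$ be the set of colorings reachable from $x$ via equal-fitness flips. Every coloring in $P$ keeps the cycle fixed: cycle vertices cannot move, since each has both other colors among its neighbors at all times. Outside vertices only shuffle within $\{0,1,2\}$, subject to properness. So every element of $P$ has exactly $3$ colors, each class contains cycle vertices, and no flip can reduce the count. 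Every neighbor outside $P$ is then either infeasible or has $4$ colors, both strictly worse. Hence $P$ is a local optimum plateau at fitness $3>2$, an NGLOP. The one thing to verify carefully is that in $P$ no outside vertex is the sole holder of a color. This holds because the $2k$ cycle vertices of each residue never change.
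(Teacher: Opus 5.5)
Your proof is correct and follows the paper's route. Both use the residue-mod-$3$ coloring of the cycle, give each outside vertex the color $d$ it opposes, show that cycle vertices are frozen, and observe that outside vertices can only move within a plateau at three colors. Your choice $x_v=d$ is the right one; the paper's $x_v=x_{v_{d+1}}$ appears to be off by one. You should drop your opening claim that $x$ is a strict (singleton) local optimum, since your own plateau argument via $P$ is what is actually needed, and it makes the paper's final step explicit.
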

\begin{proof}
To prove this, we first construct a feasible $3$-coloring $x$ and then show that it is an NGLOP. 

First, let $v_1, v_2, \ldots v_{6k} \in V$ be a $3$-chordless $C_{6k}$ in $G$ for which the above criteria are met. For all $j\le i$, let $x_{v_j} \equiv j \pmod 3$. Such a coloring is possible as each vertex appears only once, meaning we assign only one color per vertex. 

Now, we show that the coloring is feasible. First, note that $i \equiv 0 \pmod 3$, so the first and last vertex are not colored in the same color (as the first vertex is colored in color $1$), and for all $j\le 6k$, it holds that $j \not\equiv j+1 \pmod 3$. This means that no two adjacent vertices in the circle are colored in the same color. Further, we know that for all $j \le 6k$, there exists no $l \le 6k$ such that $l \equiv j \pmod 3$ and $\{v_j, v_l \}\in E$, which implies that no two vertices of the same color are connected by an edge, meaning the coloring is feasible.

Now, it remains to be shown that we can color the remaining vertices in such a way that the coloring remains feasible. For this, let $v\in V$ be a non-$C_{6k}$ vertex. Let $d \in \{0,1,2\}$ such that $v$ is $d$-opposing, which we know must be the case for at least one value $d$. Then, let $x_v=x_{v_{d+1}}$. As $v$ is $d$-opposing, we know that it is not connected to any same-colored vertex within the $C_{6k}$.

Let $u$ be a vertex outside of the $C_{6k}$ such that $x_v=x_u$. We know that, in order for $u$ to be colored in $x_v$, it must be $d$-opposing. Then, it follows from the condition that no edge can exist between $v$ and $u$, making the coloring feasible.

Now, we show that $x$ is an NGLOP. For this, we show that the coloring of the $C_{6k}$-subgraph is an NGLO and that changes to the coloring in the rest of the graph have no bearing on the fitness as long as feasibility is maintained, making it a plateau.

We note that $x$ has three colors in use, meaning that 
\begin{enumerate}
    \item it is a non-optimal coloring, as a coloring with two colors exists and
    \item any neighboring coloring using more colors would immediately have a worse fitness, meaning that we can focus only on neighboring colorings using three or fewer colors.
\end{enumerate}

First, we consider the coloring within the $C_{6k}$. Let $j \le i$ and $y$ be a coloring neighboring $x$ such that $x[v_j] \ne y[v_j]$. We know that both vertices neighboring $v_j$ (meaning $v_{j-1}$ and $v_{j+1}$) have different colors in $x$ and therefore also in $y$, as the color of $v_j$ is the only one that changed. Therefore, assuming the total number of colors does not increase, it must either hold that $x_{v_j} = x_{v_{j-1}}$ or $x_{v_j} = x_{v_{j+1}}$. Either of these options would lead to an infeasible coloring, thereby worsening the fitness. With this, it is shown that the coloring within the $C_{6k}$-subgraph is an NGLO.

As we now know that any change to a color within the $C_{6k}$-subgraph worsens the fitness of the coloring and there are three colors in use within the $C_{6k}$-subgraph, we also know that we cannot decrease the number of colors used below three. Since the coloring is feasible already, the only way to improve the fitness would be to reduce the number of colors used, meaning that no neighboring coloring where a vertex outside of the $C_{6k}$ changes colors can improve the fitness. Therefore, $x$ is part of an NGLOP, as there might be neighboring colorings with the same fitness (where vertices outside of the $C_{6k}$ have different colors), but none with a better fitness.
\end{proof}

We now generalize those findings to a class of graphs whose fitness landscape is multimodal with NGLOPs at $k$ colors.

\begin{definition} [$k$-Crown Graph]
    We refer to a bipartite graph where both partitions have size $k$ and each vertex has a degree of $k-1$ as a \emph{$k$-crown graph}.
\end{definition}

\Cref{fig:k_n} shows two example instances of a $k$-crown graph for $k=3$ and $k=4$. Note that the $3$-crown graph is isomorphic to the $C_6$.

    \begin{figure} [hbt!] 
        \centering
        \begin{subfigure}[b]{0.45\textwidth}
        \captionsetup{justification=centering}
        \begin{tikzpicture}[
    greennode/.style={circle, draw=black!60, fill=white!5, very thick, minimum size=7mm},
    rednode/.style={circle, draw=black!60, fill=white!5, very thick, minimum size=7mm},
    bluenode/.style={circle, draw=black!60, fill=white!5, very thick, minimum size=7mm},
    ]
    \node[greennode]      (green1)                              {1};
    \node[greennode]        (greenA)       [below=2cm of green1] {A};
    \node[rednode]        (red2)       [right=of green1] {2};
    \node[rednode]        (redB)       [right=of greenA] {B};
    \node[bluenode]        (blue3)       [right=of red2] {3};
    \node[bluenode]        (blueC)       [right=of redB] {C};
    
    \draw[-] (green1.south) -- (redB.north);
    \draw[-] (green1.south) -- (blueC.north);
    \draw[-] (red2.south) -- (greenA.north);
    \draw[-] (red2.south) -- (blueC.north);
    \draw[-] (blue3.south) -- (greenA.north);
    \draw[-] (blue3.south) -- (redB.north);
    
    \end{tikzpicture}
        \caption{$k=3$}
        \end{subfigure}
            \begin{subfigure}[b]{0.45\textwidth}
            \captionsetup{justification=centering}
        \begin{tikzpicture}[
    greennode/.style={circle, draw=black!60, fill=white!5, very thick, minimum size=7mm},
    rednode/.style={circle, draw=black!60, fill=white!5, very thick, minimum size=7mm},
    bluenode/.style={circle, draw=black!60, fill=white!5, very thick, minimum size=7mm},
    whitenode/.style={circle, draw=black!60, fill=white!5, very thick, minimum size=7mm},
    ]
    \node[greennode]      (green1)                              {1};
    \node[greennode]        (greenA)       [below=2cm of green1] {A};
    \node[rednode]        (red2)       [right=of green1] {2};
    \node[rednode]        (redB)       [right=of greenA] {B};
    \node[bluenode]        (blue3)       [right=of red2] {3};
    \node[bluenode]        (blueC)       [right=of redB] {C};
    \node[whitenode]      (white4)            [right=of blue3] {4};
    \node[whitenode]        (whiteD)       [right=of blueC] {D};
    
    \draw[-] (green1.south) -- (redB.north);
    \draw[-] (green1.south) -- (blueC.north);
    \draw[-] (green1.south) -- (whiteD.north);
    \draw[-] (red2.south) -- (greenA.north);
    \draw[-] (red2.south) -- (blueC.north);
    \draw[-] (red2.south) -- (whiteD.north);
    \draw[-] (blue3.south) -- (greenA.north);
    \draw[-] (blue3.south) -- (redB.north);
    \draw[-] (blue3.south) -- (whiteD.north);
    \draw[-] (white4.south) -- (greenA.north);
    \draw[-] (white4.south) -- (redB.north);
    \draw[-] (white4.south) -- (blueC.north);
    
    \end{tikzpicture}
        \caption{$k=4$}
        \end{subfigure}
            \caption{Two example instances of a $k$-crown graph for $k=3$ and $k=4$.}
    \label{fig:k_n}
    \end{figure}

We further introduce the term \emph{opposing pair} to refer to two vertices in opposite partitions that are not connected by an edge.

\begin{definition}[Opposing Pair] \label{def:opp_pair}
    Let $G=(V,E)$ be a bipartite graph with partitions $U\uplus W$. Let $u\in U$ and $w\in W$. Then, we call $u$ and $v$ an \emph{opposing pair} if $\{u, v\} \notin E$.
\end{definition}

We now prove that the fitness landscape for any bipartite graph $G = (V, E)$ contains NGLOPs at $k$ colors with $k >2$ and is therefore multimodal, provided that $G$ contains a $k$-crown graph as an induced subgraph and certain criteria regarding the structure of the rest of the graph are met. Intuitively, the fitness landscape for the $k$-crown graph contains an NGLO (which we prove in \Cref{lem:k_k}), which leads to an NGLOP in the fitness landscape for $G$ unless a part of $G$'s structure makes the necessary coloring of the $k$-crown graph impossible.

\begin{theorem} \label{the:crown}
    Let $k>2$. Let $G=(V, E)$ be a bipartite graph such that 
    \begin{enumerate}
    \item $G$ contains a $k$-crown graph as an induced subgraph,
    \item no vertex outside of the $k$-crown graph is connected to all vertices in one of the $k$-crown graph's partitions and 
    \item for all $u, v \in V$ and $a, b$ as vertices in the $k$-crown graph, it holds that if $(u,a), (v, b)$ and $(a,b)$ are opposing pairs, then so is $(u,v)$.
\end{enumerate}
    Then, the fitness landscape $(\CalX_n, \flip, \fCol_G)$ is multimodal with NGLOPs at $k$ colors.
\end{theorem}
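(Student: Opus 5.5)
The plan mirrors the proof of \Cref{the:3-circle}: we first exhibit a feasible $k$-coloring of the crown that is an NGLO for the crown alone (this is \Cref{lem:k_k}), then extend it to all of $G$ using conditions 2 and 3, and finally show that the extended coloring lies on an NGLOP.

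Let the crown have partitions $\{a_1,\dots,a_k\}$ and $\{b_1,\dots,b_k\}$, where $a_i$ is adjacent to every $b_j$ with $j\neq i$. Thus the opposing pairs inside the crown are exactly the pairs $(a_i,b_i)$. We color both $a_i$ and $b_i$ with color $i$.

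This coloring is feasible, since same-colored vertices form opposing pairs. It is also locally optimal on the crown. Any recoloring of $a_i$ to an already-used color $j\neq i$ creates the monochromatic edge $\{a_i,b_j\}$, because $j \neq i$. The only other option is to recolor $a_i$ to a fresh color, which raises the color count. Since $k>2$ and bipartite graphs are $2$-colorable, this coloring is non-global.

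Next we extend the coloring to $G$. By condition 2, every vertex $u$ outside the crown, say $u\in U$ (the side containing the $b_j$'s neighbours, i.e.\ the $a$-side), has a non-neighbour $b_i$ in the opposite crown partition. Hence $(u,b_i)$ is an opposing pair, or $u$ is in the same partition as some crown vertex, which is the harmless case. We give $u$ the color $i$ of such a non-neighbour. The precise rule is to pick a crown vertex of the \emph{opposite} side that is non-adjacent to $u$. If $u$ lies on the same side as the $a_j$'s, it is automatically non-adjacent to every $a_j$, so the rule must use the opposite side, which condition 2 guarantees is possible.

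We then check feasibility edge by edge.
\begin{itemize}
\item An edge between $u$ and a crown vertex of color $i$: since $u$ got color $i$, it is non-adjacent to the opposite-side crown vertex of color $i$, and it lies in the same partition as the other crown vertex of color $i$, so there is no conflict.
\item An edge $\{u,v\}$ between two outside vertices of the same color $i$: these lie in opposite partitions, say $u$ on the $a$-side and $v$ on the $b$-side. Then $(u,b_i)$, $(v,a_i)$ and $(a_i,b_i)$ are opposing pairs, so condition 3 forces $(u,v)$ to be opposing, i.e.\ non-adjacent.
\end{itemize}
Indexing the pairs correctly in condition 3 is routine.

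Finally, we argue the NGLOP property as in \Cref{the:3-circle}. The extended coloring uses exactly $k$ colors, since the crown already uses all $k$ and every outside vertex reuses one of them. Any neighbouring coloring obtained by recoloring a crown vertex is infeasible or uses more colors. A recoloring of an outside vertex cannot reduce the number of colors, since all $k$ colors remain present on the crown. So no single flip from the plateau strictly improves fitness. The same holds after any sequence of equal-fitness flips, because such flips only touch outside vertices and leave the crown coloring fixed. Hence the plateau containing our coloring is a local optimum plateau at fitness $k>2$, i.e.\ an NGLOP, and the landscape is multimodal.

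The main obstacle is the extension step: choosing colors consistently for outside vertices and matching the three opposing pairs to the hypothesis of condition 3. I would handle it by fixing, for each outside vertex, one opposing crown partner, and reading condition 3 with $a,b$ taken as those partners.
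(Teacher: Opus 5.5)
Your proof is correct and follows the paper's own route: the same crown coloring from \Cref{lem:k_k}, the same extension rule (each outside vertex takes the color of a non-adjacent crown vertex on the opposite side, which exists by condition 2), and condition 3 to rule out monochromatic edges. The differences are minor: you handle edges from outside vertices to the crown directly via bipartiteness, whereas the paper routes these through condition 3 as well, and you spell out why the plateau stays a local optimum plateau more carefully than the paper does.
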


To prove this, we first show that the fitness landscape for the $k$-crown graph contains an NGLO at $k$ colors and then conclude that this causes an NGLOP in the fitness landscape for $G$. We denote by $V_k \subset V$ the vertices in the $k$-crown graph.

\begin{lemma} \label{lem:k_k}
    The fitness landscape $(\CalX_n, \flip, \fCol_G)$ for the $k$-crown graph contains an NGLO at $k$ colors.
 \end{lemma}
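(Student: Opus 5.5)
We construct an explicit proper $k$-coloring of the $k$-crown graph and show that every single-vertex recoloring that does not add a color creates a monochromatic edge. Label the partitions $U=\{u_1,\dots,u_k\}$ and $W=\{w_1,\dots,w_k\}$ so that $\{u_i,w_j\}\in E$ iff $i\neq j$; this is the labelling of \Cref{fig:k_n}, with $i$ against the $i$-th letter. Define $x$ by giving $u_i$ and $w_i$ the same color $i$, for all $i\in[k]$.

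First we check that $x$ is proper. Every edge is of the form $\{u_i,w_j\}$ with $i\neq j$, so its endpoints carry colors $i\neq j$. Hence $x$ is feasible and $\fCol_G(x)=k$. Since $G$ is bipartite and connected (for $k\ge 3$ the crown graph is connected), the coloring by partitions uses $2<k$ colors. So $x$ is not a global optimum.

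Next we show $x$ is a local optimum. Take any neighbor $y$ of $x$ under $\flip$, obtained by recoloring a single vertex, say $u_i$ (the case of $w_i$ is symmetric), to a color $c\neq i$.
\begin{itemize}
\item If $c\notin[k]$, then $u_i$ was the only vertex of $U$ with color $i$, but $w_i$ still has color $i$. So color $i$ survives and $c$ is new, giving $k+1$ colors and $\fCol_G(y)\ge k+1>k$.
\item If $c=j\in[k]$ with $j\neq i$, then $u_i$ is adjacent to $w_j$, which has color $j$. This creates a monochromatic edge, so $\fCol_G(y)\ge (n+1)+1>k$ (the penalty term dominates, since at least one color is still used).
\end{itemize}
In every case $\fCol_G(y)>\fCol_G(x)$. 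Therefore $x$ is a local optimum in the strict sense of \Cref{def:optimum}, and hence an NGLO at $k$ colors.

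The only delicate point is the first case: we must note that recoloring $u_i$ does not remove color $i$, precisely because its partner $w_i$ keeps it. This is why the pairing into the matched non-edges $(u_i,w_i)$ is essential. Without it, a recoloring could drop a color and keep the count from increasing, and then $x$ need not be strictly better than all its neighbors.
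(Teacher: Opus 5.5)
Your proof is correct and follows essentially the same route as the paper: both use the $k$-coloring in which each color appears exactly once per partition, on the matched non-adjacent pairs $(u_i,w_i)$. Both then check that any flip either takes a neighbor's color or introduces a new color. You also make explicit two points the paper leaves implicit: feasibility forces this pairing, and recoloring $u_i$ to a fresh color cannot drop color $i$, because $w_i$ keeps it.
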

 \begin{proof}
     Let $x$ be a feasible $k$-coloring. Then a feasible coloring $x$ is an NGLO if, for each vertex, it holds that all of its neighbors have distinct colors. \Cref{fig:local_opt} gives examples of such colorings for $k=3$ and the $k=4$.

    \begin{figure}[htb!]   
    \centering
    \begin{subfigure}[b]{0.45\textwidth}
    \captionsetup{justification=centering}
    \begin{tikzpicture}[
greennode/.style={circle, draw=green!60, fill=green!5, very thick, minimum size=7mm},
rednode/.style={rectangle, draw=red!60, fill=red!5, very thick, minimum size=6mm},
bluenode/.style={rectangle, draw=blue!60, fill=blue!5, very thick, minimum size=6mm, rounded corners},
]
\node[greennode]      (green1)                              {1};
\node[greennode]        (greenA)       [below=2cm of green1] {A};
\node[rednode]        (red2)       [right= 1.25cm of green1] {2};
\node[rednode]        (redB)       [right=1.2cm of greenA] {B};
\node[bluenode]        (blue3)       [right=1.25cm of red2] {3};
\node[bluenode]        (blueC)       [right=1.2cm of redB] {C};

\draw[-] (green1.south) -- (redB.north);
\draw[-] (green1.south) -- (blueC.north);
\draw[-] (red2.south) -- (greenA.north);
\draw[-] (red2.south) -- (blueC.north);
\draw[-] (blue3.south) -- (greenA.north);
\draw[-] (blue3.south) -- (redB.north);

\end{tikzpicture}
    \caption{$k=3$}
    \end{subfigure}
        \begin{subfigure}[b]{0.45\textwidth}
        \captionsetup{justification=centering}
    \begin{tikzpicture}[
greennode/.style={circle, draw=green!60, fill=green!5, very thick, minimum size=7mm},
rednode/.style={rectangle, draw=red!60, fill=red!5, very thick, minimum size=6mm},
bluenode/.style={rectangle, draw=blue!60, fill=blue!5, very thick, minimum size=6mm, rounded corners},
whitenode/.style={circle, draw=black!60, fill=white!5, very thick, minimum size=7mm},
]
\node[greennode]      (green1)                              {1};
\node[greennode]        (greenA)       [below=2cm of green1] {A};
\node[rednode]        (red2)       [right=of green1] {2};
\node[rednode]        (redB)       [right=of greenA] {B};
\node[bluenode]        (blue3)       [right=of red2] {3};
\node[bluenode]        (blueC)       [right=of redB] {C};
\node[whitenode]      (white4)            [right=of blue3] {4};
\node[whitenode]        (whiteD)       [right=0.875cm of blueC] {D};

\draw[-] (green1.south) -- (redB.north);
\draw[-] (green1.south) -- (blueC.north);
\draw[-] (green1.south) -- (whiteD.north);
\draw[-] (red2.south) -- (greenA.north);
\draw[-] (red2.south) -- (blueC.north);
\draw[-] (red2.south) -- (whiteD.north);
\draw[-] (blue3.south) -- (greenA.north);
\draw[-] (blue3.south) -- (redB.north);
\draw[-] (blue3.south) -- (whiteD.north);
\draw[-] (white4.south) -- (greenA.north);
\draw[-] (white4.south) -- (redB.north);
\draw[-] (white4.south) -- (blueC.north);

\end{tikzpicture}
    \caption{$k=4$}
    \end{subfigure}
        \caption{Two example instances of a $k$-crown graph for $k=3$ and $k=4$. As any given vertex is connected to at least one instance of each other color, any flip would lead to an infeasible coloring.}
\label{fig:local_opt}
\end{figure}

Assume $x$ is such a coloring, meaning that each of the $k$ colors appears exactly once in each partition. Let $v\in V_k$. We know that $v$ has $k-1$ neighbors and therefore $k-1$ neighboring colors. This means that, when changing the color of $v$, there are two possibilities for neighboring colorings $y$:

\begin{enumerate}
    \item $y_v$ is one of the $k-1$ colors appearing in $x$ other than $x_v$. That color must also be one of the $k-1$ colors neighboring $v$. Then, we gain one monochromatic edge, worsening the fitness in comparison to $x$.
    \item $y_v$ is a color that does not appear in $x$. In that case, the number of colors in use in $y$ is greater than that in $x$, worsening the fitness.
\end{enumerate}

Therefore, any possible neighbors of $x$ have worse fitness, making $x$ a local optimum. We also know that $x$ is not a global optimum, as we have $k>2$ colors in use, but know the $k$-crown graph to be $2$-colorable. Therefore, $x$ is an NGLO.
 \end{proof}

 Now, we use this lemma to prove \Cref{the:crown}.
 \begin{proof} [Proof of \Cref{the:crown}]
     From \Cref{lem:k_k}, we know that we can $k$-color the $k$-crown graph in such a way that the coloring is an NGLO by coloring each of the $k$ vertices in each partition in a distinct color (while maintaining a feasible coloring). It remains to prove that such a coloring is always possible in $G$. 

     Now, assume that there are $k$ colors in use in $x$. Further, the $k$-crown graph is colored with each color used only once per partition. Let $x$ be a such coloring. Now, we must show that, for all vertices $v$ outside of the $k$-crown graph, there exists a color $x_v$ such that $x$ remains feasible.

     Let $v \in V \setminus V_k$. Let $a \in V_k$ be a vertex in the opposite partition as $v$ such that $(a,v)$ is an opposing pair. We know that such a vertex must exist, as no vertex in $G$ is connected to all vertices in one partition of the $k$-crown graph. Then, let $x_v=x_a$. 
     
     Now, let $u\in V$ be a vertex such that $\{u,v\} \in E$ (meaning $(u,v)$ is not an opposing pair) and let $b \in V_k$ such that $(u,b)$ is an opposing pair. Then we know that $(a,b)$ cannot be an opposing pair, meaning  $\{a,b\} \in E$ must hold and therefore $x_a \ne x_b$, which also implies $x_v \ne x_u$. 

     This means that for any vertex $v$, there exists a color such that no monochromatic edge is created. Therefore, we can construct the described coloring without creating monochromatic edges, proving that an NGLOP at $k$ colors exists.
 \end{proof}

Just as for the Dominating Set problem, we derive further results by turning to results from reconfiguration theory. In the following, for a given graph $G$, we use $\CalF_G^\ell$ to denote the set of all proper $\ell$-colorings of $G$.

\citeauthor{bonamy_reconfiguration_2014} \cite{bonamy_reconfiguration_2014} consider changing the color of a single vertex as a modification rule; this corresponds to the \flipN. They give a general graph class, called $k$-color-dense graphs, and show that, for any such graph $G$ and $\ell \geq k+1$, $(\CalF_G^\ell,\flip)$ is connected under this modification rule. As example subclasses of this general graph class they give the class of all chordal graphs and all chordal bipartite graphs.

We give the following general theorem to turn such connectedness results into landscape results.

\begin{theorem}\label{the:coloring_connected}
    Let $G$ be a graph with chromatic number $k$ such that for every $l\ge k+1$, $(\CalF_G^\ell,\flip)$ is connected. Then the fitness landscape $(\CalX_n,\flip, \fCol_G)$ is equimodal.
\end{theorem}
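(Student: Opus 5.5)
Since infeasible colorings are never local optima (recoloring an endpoint of a monochromatic edge with an unused color strictly improves fitness, as noted after the definition of $\fCol_G$), it suffices to show the following. Every feasible coloring $x$ using $\ell \ge k+1$ colors admits a positive path $x \transReachPlus y$ to some $y$ with $\fCol_G(y) < \fCol_G(x)$. Then $x$ cannot lie in any local optimum plateau: all colorings on that path have fitness at most $\fCol_G(x)$, so they are either in the plateau of $x$ or already better. Hence every local optimum plateau consists of $k$-colorings, which gives equimodality.

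For the positive path, fix a feasible $x$ with exactly $\ell \ge k+1$ colors, call this color set $C$, and pick an optimal proper $k$-coloring $z$. Rename the colors of $z$ so that they form a subset of $C$; renaming is harmless because fitness depends only on the number of distinct colors. Now both $x$ and $z$ lie in $\CalF_G^\ell$, read as proper colorings with palette $C$ (of size $\ell$). By hypothesis $(\CalF_G^\ell,\flip)$ is connected, so there is a sequence $x = w_0, w_1, \dots, w_m = z$ of proper colorings with palette $C$ in which consecutive colorings differ in one vertex.

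Every $w_i$ is proper and uses at most $\ell$ colors, so $\fCol_G(w_i) \le \ell = \fCol_G(x)$. The path is therefore not monotone, since intermediate values can go up and down. To extract a positive path, let $i$ be the first index with $\fCol_G(w_i) < \ell$; it exists because $\fCol_G(z) = k < \ell$. Then $w_0, \dots, w_{i-1}$ all have fitness exactly $\ell$, and $w_i$ has fitness below $\ell$. Thus every step $w_{j} \reachPlus w_{j+1}$ for $j < i$ is a valid positive step, and $x \transReachPlus w_i$ with $\fCol_G(w_i) < \fCol_G(x)$, as required.

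The main point to check is the identification between the reconfiguration setting and the landscape. The graph $(\CalF_G^\ell,\flip)$ is read as proper colorings from a fixed palette of $\ell$ colors, while the landscape uses the palette $\{0,\dots,n-1\}$. We therefore fix the palette $C$ inside $\{0,\dots,n-1\}$ and use symmetry under color renaming to apply the connectivity hypothesis there. We also need that a $\flip$ move within $\CalF_G^\ell$ is a $\flip$ move in $\CalX_n$, which holds directly. A small subtlety is that $\CalF_G^\ell$ contains colorings using fewer than $\ell$ colors; this is exactly what makes the first-improvement argument work, so no extra care is needed.
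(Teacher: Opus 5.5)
Your proof is correct, but it turns connectivity into a positive path differently from the paper.

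The paper proves a level-to-level claim. For $x\in\CalF_G^j$ and $y\in\CalF_G^{j+1}$, it pads $x$ to some $z\in\CalF_G^{j+1}$ by giving one vertex a fresh color. It then asserts $y\transReachPlus z\reachPlus x$ directly from connectivity of $\CalF_G^{j+1}$. Finally it inducts downward to conclude that every coloring with more than $k$ colors reaches \emph{every} prescribed $k$-coloring.

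You instead fix the palette of $x$ and rename an optimal coloring into it. You follow the reconfiguration sequence only up to its first strict fitness drop, then finish with the explicit plateau argument. This proves the weaker statement that \emph{some} strictly better coloring is reachable, which is exactly what equimodality needs.

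What your route buys is rigor. A reconfiguration sequence in $\CalF_G^{j+1}$ may drop to $j$ colors and climb back, so the paper's step $y\transReachPlus z$ does not follow from connectivity. The paper also leaves implicit how a palette of size $\ell$ in $\CalF_G^\ell$ is identified with colors in $\CalX_n$.

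The stronger claim in fact fails as literally stated. Take $G=P_3$ with vertices $a,b,c$; it satisfies the hypothesis with $k=2$. From $(0,1,2)$ the only non-worsening moves lead to $(2,1,2)$ or $(0,1,0)$. After that, only the middle vertex can be recolored without worsening the fitness, so $(1,0,1)$ is never reached. Your first-drop truncation avoids this issue entirely.
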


\begin{proof}
    We first show that, for any $j \geq k$ and any colorings $x \in \CalF_G^j$ and $y \in \CalF_G^{j+1}$, we have $y \transReachPlus x$. Let $j \geq k$, $x \in \CalF_G^j$ and $y \in \CalF_G^{j+1}$ be given. Let $z$ be any coloring derived from $x$ by recoloring a single vertex with a color not used in $x$. Clearly, $z$ is feasible, so $z \in \CalF_G^{j+1}$, and $z \reachPlus x$. Using the assumption of the theorem, we now get
    $
    y \transReachPlus z \reachPlus x
    $
    as desired. 
    Let now $x \in \CalF_G$ be a coloring using more than $k$ colors and $y$ a coloring using exactly $k$ colors. Then, with a straightforward induction using the claim we just showed, we have $x \transReachPlus y$. In particular, $(\CalX_n,\flip, \fCol_G)$ has no NGLOPs, as claimed.
\end{proof}

Using this as well as results given by \citeauthor{bonamy_reconfiguration_2014} \cite{bonamy_reconfiguration_2014}, we can now derive the following theorems for chordal graphs and chordal bipartite graphs; we give the well-known definition for completeness.

\begin{definition}[Chordal Graph]
    A \emph{chordal graph} is a graph with no induced cycle of length more than $3$.
\end{definition}

\begin{theorem} \label{the:coloring_chordal}
    Let $G$ be a chordal graph with chromatic number $k$. Then the fitness landscape $(\CalX_n,\flip, \fCol_G)$ is equimodal.
\end{theorem}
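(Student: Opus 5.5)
The plan is to derive the statement directly from \Cref{the:coloring_connected}, with the reconfiguration result of \citeauthor{bonamy_reconfiguration_2014} \cite{bonamy_reconfiguration_2014} supplying the hypothesis. Let $G$ be chordal with chromatic number $k$. The theorem only needs that $(\CalF_G^\ell,\flip)$ is connected for every $\ell \geq k+1$.

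The first step is to quote the result of \cite{bonamy_reconfiguration_2014}. It says that chordal graphs are $k$-color-dense, with $k$ the chromatic number, which for chordal graphs equals the clique number $\omega(G)$. It further says that for every $k$-color-dense graph and every $\ell \geq k+1$ the reconfiguration graph of proper $\ell$-colorings under single-vertex recoloring is connected.

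The second step is to check that their setting matches ours. The hypothesis of \Cref{the:coloring_connected} must be read with $\CalF_G^\ell$ as the set of proper colorings using colors from a palette of size $\ell$, so that the intermediate colorings in the proof of \Cref{the:coloring_connected} lie in the same set. The colorings in $\CalX_n$ are strings over $\{0,\dots,n-1\}$. A proper coloring using $j$ colors, with $j \leq \ell$, can be relabeled into the palette $[\ell]$ by flips: move each vertex of a color class outside the palette one at a time to an unused palette color, or to a fresh color. I expect these steps to be fitness-nonincreasing, but this should be double-checked.

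The main obstacle is exactly this bookkeeping between ``$\ell$-colorings over a fixed palette'' and ``colorings with exactly $j$ colors from an $n$-letter alphabet''. I would handle it the way the proof of \Cref{the:coloring_connected} already does: it uses connectivity only to join two proper colorings with at most $j+1$ colors. Within the fixed palette of size $j+1 \leq n$, every coloring on the reconfiguration path uses at most $j+1$ colors, so the fitness never exceeds $j+1$. The issue I would verify is that the start $y$ of the positive path should have fitness exactly $j+1$, so that no step along the way raises the fitness above it. With the hypothesis verified, \Cref{the:coloring_connected} gives a positive path from any proper coloring to a $k$-coloring. Since infeasible colorings are never local optima, as noted earlier, no NGLOP exists and the landscape is equimodal.
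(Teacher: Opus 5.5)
Your proposal is correct and matches the paper's proof. The paper likewise cites \cite{bonamy_reconfiguration_2014}: a chordal graph with chromatic number $k$ is $k$-color-dense, so $\CalF_G^\ell$ is connected under single-vertex recoloring for all $\ell \geq k+1$, and \Cref{the:coloring_connected} then gives the result.

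The palette bookkeeping you worry about belongs to \Cref{the:coloring_connected}, which you may simply invoke. If you do re-derive it, two of your remarks need fixing. First, a reconfiguration path that never exceeds the starting fitness is not automatically a positive path, since it may drop to $j$ colors and climb back. Cut it at the first coloring with at most $j$ colors instead. Second, the one-at-a-time relabeling is not fitness-nonincreasing: moving part of a color class to an unused color temporarily raises the color count. It is also unnecessary, because you can take the color set of $y$ itself as the palette.
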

\begin{proof}
    From \cite[Theorem 6]{bonamy_reconfiguration_2014} we have that any $k'$-colorable chordal graph is $k'$-color-dense. Therefore, $G$ is specifically $k$-color dense. It then follows from \cite[Theorem 2]{bonamy_reconfiguration_2014} that, for all $l \ge k+1$, $\CalF_G^l$ is connected. Therefore, by \Cref{the:coloring_connected}, it follows that the fitness landscape for $G$ is equimodal for the \flipN.
\end{proof}

As the following example demonstrates, there are chordal graphs that have multiple distinct colorings, even taking into account relabeling of colors and graph automorphisms. Therefore, this fitness landscape is not plateau-unimodal.

\begin{theorem}
    There is a chordal graph $G$ such that the landscape $(\CalX_n,\flip, \fCol_G)$ is not plateau-unimodal.
\end{theorem}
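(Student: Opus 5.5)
We will exhibit a small explicit chordal graph with two optimal colorings that lie in different local optimum plateaus. The simplest choice is the path $P_3$ with vertices $a,b,c$ and edges $\{a,b\},\{b,c\}$. It is a tree, hence chordal, and has chromatic number $2$. However, since colorings are strings in $\CalX_n$ and plateaus are defined on concrete strings, relabeling matters. To avoid relying on that technicality, in line with the remark preceding the statement, we would instead take a graph with genuinely distinct optimal colorings up to relabeling and automorphisms. A natural candidate is the path $P_4$ with vertices $a,b,c,d$ and chromatic number $2$, which has a unique 2-coloring up to relabeling. So we move to $3$ colors: take two triangles sharing a vertex, or better two triangles $\{a,b,c\}$ and $\{c,d,e\}$ glued at $c$ plus a pendant structure, and choose colorings in which vertex $a$ and vertex $d$ do or do not share a color.

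Concretely, we use $G$ consisting of triangles $\{a,b,c\}$ and $\{b,c,d\}$ sharing the edge $\{b,c\}$, together with a triangle $\{d,e,f\}$ attached at $d$. This graph is chordal, since it is built by clique-sums, and has chromatic number $3$. In any optimal coloring, $a$ and $d$ get the same color, because both see the two colors of $b$ and $c$.

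The key step is to show that two optimal colorings $x,y$ that are not connected via a positive path lie in different local optimum plateaus. Since $x$ and $y$ are global optima, every plateau containing them is a local optimum plateau, provided we take the maximal connected equal-fitness component. So it suffices to show that the $\flip$-component of optimal $3$-colorings containing $x$ does not contain $y$. For this we look for a \emph{frozen} optimal coloring: in a triangle-rich graph, recoloring any single vertex to one of the $3$ used colors creates a monochromatic edge whenever each vertex sees both other colors. Recoloring to a fresh color raises the count to $4$. Hence every $3$-coloring in which each vertex has neighbors of both other colors is an isolated plateau of size $1$.

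In our graph, $a$ sees only $b$ and $c$, which are differently colored, and so do $e$ and $f$ within their triangle. Thus every vertex sees both other colors, and every optimal coloring is frozen. Two optimal colorings that differ, for instance by a permutation of colors or by the choice of colors for $e,f$ relative to $d$, then form two distinct local optimum plateaus. To also satisfy the stricter reading from the remark, we choose $x,y$ differing in whether $e$ or $f$ shares the color of $b$, which is not an automorphism-plus-relabeling equivalence once we break the symmetry with a pendant vertex on $e$; adding a leaf keeps the graph chordal.

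The main obstacle is checking that the pendant leaf does not destroy frozenness. A leaf sees only one color, so it can be recolored within the $3$ colors. We would handle this by verifying that the leaf's moves stay inside one plateau, while the core colorings on $\{a,\dots,f\}$ remain unchanged along any equal-fitness path, since the core vertices are frozen. It follows that the two plateaus are distinct, and the landscape is not plateau-unimodal.
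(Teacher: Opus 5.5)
Your mechanism matches the paper's. You assemble a chordal graph from triangles so that in every optimal $3$-coloring each core vertex already sees both other colors, hence no equal-fitness flip can change the core. You then add extra vertices so that two optimal colorings are genuinely different.

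The frozen-core part is sound for your graph, and it proves the statement in the literal string-based sense, where even a color permutation of a frozen optimum is a second local optimum plateau. It also proves it modulo renaming of colors as in \Cref{prop:uni_equi}. Already without the pendant, your $x$ and $y$ induce the different color-class partitions $\{a,d\},\{b,e\},\{c,f\}$ and $\{a,d\},\{b,f\},\{c,e\}$, and this partition is invariant along equal-fitness moves.

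One caveat: \emph{each vertex sees both other colors} does not by itself make a coloring an isolated plateau. A vertex that is the only one of its color can be recolored to a fresh color without raising the count; this already happens in $K_3$. Your conclusion holds only because every color class in your graph contains at least two core vertices, and you should say so.

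The step that fails is the one you add for the stricter reading (relabeling plus graph automorphisms), which is exactly what the paper's extra vertices are for. In your graph $b$ and $c$ are true twins, $N[b]=\{a,b,c,d\}=N[c]$, and a pendant on $e$ does not touch them, so transposing $b$ and $c$ is still an automorphism. Compose it with exchanging the colors of $b$ and $c$. This sends the coloring in which $e$ shares the color of $b$ to the one in which $e$ shares the color of $c$, i.e.\ $x$ to $y$. So, up to automorphisms and relabeling, your decorated graph has only one optimal core coloring and does not witness the strict version at all.

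You must break both twin pairs $\{b,c\}$ and $\{e,f\}$, for instance by hanging a leaf on $b$ as well as on $e$. The degrees then force a trivial automorphism group: $b$ and $d$ have degree $4$, with neighbor degrees $\{1,2,3,4\}$ versus $\{2,3,3,4\}$, so neither can map to the other, and everything else is then pinned down. Frozenness is unaffected, because $b$ still sees both other colors through $a,c,d$. The paper achieves the same on a bowtie, attaching a path of length two to one triangle and a single leaf to the other.
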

\begin{proof}
    See \Cref{fig:counterexampleColoringUnimodal} for two different global optima where for both the green vertex and its four neighbors cannot be recolored in any recoloring sequence without violating feasibility. Thus, they cannot be transformed into each other. Note that the further vertices ensure that the global optima are not isomorphic (in terms of isomorphisms of the graph).
\end{proof}

\begin{figure}[htb!]   
    \centering
    \begin{tikzpicture}[
        greennode/.style={circle, draw=green!60, fill=green!5, very thick, minimum size=7mm},
        rednode/.style={rectangle, draw=red!60, fill=red!5, very thick, minimum size=6mm},
        bluenode/.style={rectangle, draw=blue!60, fill=blue!5, very thick, minimum size=6mm, rounded corners},
    ]
        \node[bluenode] (V1) at (0, 4) {B};
        \node[rednode] (V2) at (2, 4) {R}
            edge (V1);
        \node[bluenode] (V3) at (4, 4) {B}
            edge (V2);
        \node[rednode] (V4) at (4, 6) {R}
            edge (V3);

        \node[greennode] (U) at (6, 5) {G}
            edge (V3) edge (V4);

        \node[rednode] (W1) at (8, 6) {R}
            edge (U);
        \node[bluenode] (W2) at (8, 4) {B}
            edge (U) edge (W1);
        \node[rednode] (W3) at (10, 4) {R}
            edge (W2);
    
        \node[bluenode] (v1) at (0, 0) {B};
        \node[rednode] (v2) at (2, 0) {R}
            edge (v1);
        \node[bluenode] (v3) at (4, 0) {B}
            edge (v2);
        \node[rednode] (v4) at (4, 2) {R}
            edge (v3);

        \node[greennode] (u) at (6, 1) {G}
            edge (v3) edge (v4);

        \node[bluenode] (w1) at (8, 2) {B}
            edge (u);
        \node[rednode] (w2) at (8, 0) {R}
            edge (u) edge (w1);
        \node[bluenode] (w3) at (10, 0) {B}
            edge (w2);
        
    \end{tikzpicture}
    \caption{A chordal graph with two different optimal colorings (even up to symmetry).}
    \label{fig:counterexampleColoringUnimodal}
\end{figure}

\begin{definition}[Chordal Bipartite Graph]
    A \emph{chordal bipartite graph} is a bipartite graph with no induced cycle of length greater than $4$.
\end{definition}

\begin{theorem}\label{the:coloring_chordal_bipartite}
    Let $G$ be a chordal bipartite graph. Then the fitness landscape $(\CalX_n,\flip, \fCol_G)$ is equimodal, with all global optima being isomorphic under permuting the colors.
\end{theorem}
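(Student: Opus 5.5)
The plan mirrors the proof of \Cref{the:coloring_chordal}, with \Cref{prop:uni_equi} handling the statement about isomorphic optima. Let $G$ be a chordal bipartite graph.

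If $G$ has no edge, then $G$ is a single vertex, since we assume connectedness. Every coloring then uses exactly one color and has fitness $1$, so the landscape is trivially equimodal, and all optima coincide up to renaming the color. Otherwise $G$ has chromatic number $k=2$, and we have to verify the hypothesis of \Cref{the:coloring_connected}: for every $\ell \ge 3$, the reconfiguration graph $(\CalF_G^\ell,\flip)$ is connected.

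For this I will invoke the result of \citeauthor{bonamy_reconfiguration_2014} \cite{bonamy_reconfiguration_2014} for chordal bipartite graphs. As noted above, they list this class as a subclass of $k$-color-dense graphs, and for bipartite graphs this should hold with $k=2$. Their Theorem~2 then gives connectedness of $\CalF_G^\ell$ under single-vertex recolorings for all $\ell \ge k+1 = 3$. \Cref{the:coloring_connected} then yields equimodality: every feasible coloring with more than two colors has a positive path to a $2$-coloring, so no NGLOP exists above two colors.

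For the second claim, every global optimum is a proper $2$-coloring. Since $G$ is connected and bipartite, a proper $2$-coloring is determined by the color of one vertex: each partition class is monochromatic and the two classes get different colors. Hence any two global optima differ only by a bijection of the color names, which is exactly \Cref{prop:uni_equi}. Combining this with equimodality gives that all global optima are isomorphic under permuting colors.

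The main obstacle is making sure the cited reconfiguration result applies with the right parameter. Chordal bipartite graphs must be $2$-color-dense, or at least the cited statement must give connectivity of $\CalF_G^\ell$ for all $\ell\ge 3$ rather than only for $\ell \ge 4$. If the reference only provides a weaker bound, I would try a direct argument instead. Chordal bipartite graphs admit a perfect edge-without-vertex elimination ordering, and they always have a vertex whose neighborhood is ordered by inclusion (a weakly simplicial vertex). Using that vertex, one can perform induction by deleting it and extending recoloring sequences, as in the standard degeneracy-style arguments.
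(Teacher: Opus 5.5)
Your proposal is correct and follows the paper's proof. You use \citeauthor{bonamy_reconfiguration_2014} to get that chordal bipartite graphs are $2$-color-dense, apply their Theorem~2 for connectivity of $\CalF_G^\ell$ for all $\ell \ge 3$, conclude equimodality via \Cref{the:coloring_connected}, and obtain the isomorphism claim from the uniqueness of $2$-colorings of connected bipartite graphs. The parameter concern you flag is exactly what the paper settles by combining Theorems~8--10 of \cite{bonamy_reconfiguration_2014} (chordal bipartite graphs lie in a superclass shown there to be $2$-color-dense), so your fallback direct argument is not needed.
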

\begin{proof}
    Per a combination of Theorems 8 (semi-false graphs are a proper superclass of chordal bipartite graphs), 9 (semi-false graphs are 2-colorable) and 10 (semi-false graphs are 2-color-dense) given by \citeauthor{bonamy_reconfiguration_2014} \cite{bonamy_reconfiguration_2014}, any chordal bipartite graph is $2$-color-dense. Therefore, $G$ is $2$-color dense. It then follows from Theorem 2 by \citeauthor{bonamy_reconfiguration_2014} \cite{bonamy_reconfiguration_2014} that, for all $l \geq 3$, $\CalF_G^l$ is connected. Therefore, by \Cref{the:coloring_connected}, it follows that the fitness landscape for $G$ is equimodal for the \flipN.
    As for all connected bipartite graphs, due to the fact that $2$-colorings are unique up to permuting colors, equimodality implies unimodality for chordal bipartite graphs.
\end{proof}

\subsection{\flipSwapN}

We now turn to the \flipSwapN. We show that $(\CalX_n, \flipSwap, \fCol_G)$ is unimodal for both the $C_{6k}$ and crown graphs, but multimodal for a graph class we introduce called \emph{spoked $C_{12k}$}, which is a variant of the $C_{12k}$ with certain chords added. 

\begin{theorem}\label{the:c6kSwap}
    Let $G=(V,E)$ be a $C_{6k}$. Then the landscape $(\CalX_n, \flipSwap, \fCol_G)$ is unimodal.
\end{theorem}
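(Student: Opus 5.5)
We show that from every proper coloring $x$ with at least three colors there is a positive path to a proper $2$-coloring. Infeasible colorings are never local optima, as noted after the definition of $\fCol_G$, and $C_{6k}$ is a connected bipartite graph. So by \Cref{prop:uni_equi} this gives unimodality up to renaming colors, exactly as in \Cref{the:universal}. The overall strategy is to reduce the number of colors one at a time. Along the way we use flips, which are positive whenever they keep the coloring proper and do not introduce a new color, and swaps, which never change the set of used colors and so are positive whenever they keep feasibility.

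The key step is to handle a vertex $v_j$ that is \emph{locked} under flips. This happens when its two neighbours carry the two other colors, and there are exactly three colors. If instead some vertex of a color class $c$ can be recolored to an already used color, we do so. We repeat until either class $c$ vanishes, giving one color fewer, or every remaining vertex of color $c$ is locked. With three colors, a vertex of color $c$ that is not locked has both neighbours of the same color or sees only one other color, so it can be flipped. Hence in the stuck configuration every vertex of color $c$ has neighbours colored $a$ and $b$ with $\{a,b,c\}$ the three colors.

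To break a lock, we use a swap. Consider a locked vertex $v_j$ with $x_{v_{j-1}}=a$, $x_{v_j}=c$, $x_{v_{j+1}}=b$. Swapping the colors of $v_j$ and $v_{j+1}$ gives $v_j$ color $b$ and $v_{j+1}$ color $c$. This is feasible iff $v_{j+2}\neq c$ and $v_{j-1}\neq b$. The second condition holds. So the swap is allowed unless $v_{j+2}$ has color $c$, and symmetrically on the left.

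The plan, then, is to take a maximal run along the cycle and look at the pattern around color-$c$ vertices. Since $6k$ is even, the cycle is not globally a repetition of an odd periodic pattern in an obstructive way. We argue that some locked $c$-vertex admits a swap that moves $c$ next to another $c$ or next to a vertex that then becomes flippable. Repeatedly sliding the $c$-vertices by swaps until two of them are at distance two, say $c\,x\,c$, the middle vertex $x$ sees only $c$. It can then be flipped to the third color, which unlocks a $c$-vertex so that it can be recolored. A cleaner alternative I would try first is an invariant argument. Measure by a potential, such as the number of vertices of color $c$ together with the sum of distances between consecutive $c$-vertices, and show that some positive flip or swap always decreases it. If there are more than three colors, we first reduce to three by flips: a vertex of a color not among the most frequent has at most two neighbour colors, so with at least four colors it can be recolored.

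I expect the main obstacle to be the locked case, specifically the fully periodic coloring $1,2,3,1,2,3,\dots$ from \Cref{the:3-circle}, which is the flip-NGLO. Here I would explicitly exhibit the swap sequence. Swapping $v_2,v_3$ gives $1,3,2,1,2,3,\dots$. Then $v_3$ (color $2$) has neighbours $3,1$ and $v_4$ has neighbours $2,2$, so $v_4$ can flip to $3$, and continuing locally yields a segment where color $1$ disappears. I would then argue that the general locked configuration reduces to pieces of this kind.
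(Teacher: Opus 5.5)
Your plan is essentially the paper's proof: reduce to three colours by flips, then slide a locked $c$-vertex by swaps toward the next $c$-vertex until the two are at distance two, flip the middle vertex to the other non-$c$ colour, and recolour the now-unlocked $c$-vertex. The one step you should state explicitly, as the paper does, is why a second $c$-vertex always exists when all $c$-vertices are locked: if $v$ were the only one, then $C_{6k}-v$ would be a properly $2$-coloured path on $6k-1$ vertices, so its endpoints (the neighbours of $v$) would share a colour; the potential-function and periodic-case detours are unnecessary, and the sum of gaps between consecutive $c$-vertices is always $6k$, so it could not serve as a potential anyway.
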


\begin{proof}
    Let $G=(V,E)$ be a $C_{6k}$ and $x$ any coloring using at least three colors. We show that no NGLOPs exist by proving the existence of a positive path from $x$ to a $2$-coloring.

    First, we show that a $3$-coloring $y$ exists such that $x \transReachPlus y$. For this, we fix three colors $C$ currently in use in $y$ and an arbitrary order on these colors. Now we recolor all vertices one by one by assigning them the minimal color from $C$ not present in their neighbors.

    It remains to be shown that there exists a $2$-coloring $z$ such that $y \transReachPlus z$. For this, we fix two colors in $y$ as final colors, leaving one non-final color. We call the non-final color yellow.
    
    \begin{lemma}
        Let $y$ be a $3$-coloring on the $C_{6k}$. Then there exists a feasible coloring $z$ such that $z$ contains one fewer instance of yellow than $y$.
    \end{lemma}
    \begin{proof}
        Let $v \in V$ be any yellow vertex. We now differentiate several cases. 

        \emph{Case 1: Both neighbors of $v$ are colored in the same color.}

        In that case, it is possible to flip $v$ to the non-neighboring final color and thereby remove one instance of yellow.

        \emph{Case 2: Both neighbors of $v$ are colored in different colors.}

        First, we note that this case implies that at least two yellow vertices in $y$, as otherwise $G-v$ is a path of odd length colored in two colors, so start and end vertex would be colored in the same color, contradicting the assumption of the case.

        Therefore, we must have at least two yellow vertices. Let $u \ne v$ be another yellow vertex such that there is no vertex of the same color closer to $v$. Now, we prove by induction that for any number of vertices between $u$ and $v$, there exists a coloring $y'$ such that there is one fewer instance of the non-final color in $y'$ than in $y$. 

        We now fix a segment of the circle starting and ending with a yellow vertex, but in between there is no yellow vertex on the segment. We can now step-by-step swap one of the yellow vertices towards the other vertices and terminate when such a swap would create a monochromatic edge. If the monochromatic edge would appear because both neighbors of the yellow vertex to be swapped have the same color, then we are done by \emph{Case 1}. Otherwise it terminates because otherwise the two yellow vertices would be neighboring. Now we are either done by \emph{Case 1} or we can recolor the single vertex between the two yellow vertices to the other final color and are done by \emph{Case 1}. 

    \end{proof}
    Now, we can use this to prove the theorem: We know that, while there are still instances of the final color left in the coloring, we can apply the above lemma and arrive at a positive path to a coloring with one fewer instance of the non-final color. By repeated application, we receive a coloring $z$ such that $y \transReachPlus z$ and $z$ is a $2$-coloring using only the two final colors. With this, we have proven equimodality and, by \Cref{prop:uni_equi}, also unimodality.
\end{proof}

\begin{theorem}\label{the:crownSwap}
    Let $G=(V,E)$ be a $k$-crown graph. Then, the fitness landscape $(\CalX_n, \flipSwap, \fCol_G)$ is unimodal.
\end{theorem}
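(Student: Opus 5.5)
The plan is to mirror the structure of the proof of \Cref{the:c6kSwap}. We show that from every feasible coloring $x$ of the $k$-crown graph with at least three colors there is a positive path to a $2$-coloring, i.e.\ $x \transReachPlus z$ for some proper $2$-coloring $z$. This gives equimodality, and \Cref{prop:uni_equi} then upgrades it to unimodality. Infeasible colorings need no treatment, since they are never local optima.

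Write the partitions as $U=\{u_1,\dots,u_k\}$ and $W=\{w_1,\dots,w_k\}$, where $u_i$ is adjacent to all $w_j$ with $j\neq i$. The crucial structural fact is that in a proper coloring, a color appearing in $U$ at a vertex $u_i$ may appear in $W$ only at $w_i$. Hence a color used on at least two vertices of $U$ cannot appear in $W$ at all.

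\textbf{Step 1 (no color on both sides).} Consider the colors that appear in both partitions. Such a color is used exactly once in $U$ and once in $W$, on a pair $\{u_i,w_i\}$. For each such pair we fix a color $c$ used only in $U$, on some $u_j$ with $j\neq i$. If $U$ has at least two colors, the only problem would be that every color of $U$ is shared; in that case we take a shared color $c$ on $u_j$ with $j\neq i$, and we consider swapping.

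The simplest move is a flip of $u_i$ to $c$. It is feasible iff $c$ does not occur in $W\setminus\{w_i\}$, and it does not increase the number of colors. The flip fails exactly when $c$ appears at $w_j$ as well.

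In that case we use a swap instead. Swapping the colors of $w_i$ and $w_j$ puts color $x_{w_i}=x_{u_i}$ on $w_j$ and color $x_{w_j}=x_{u_j}$ on $w_i$. Now $w_j$ has the color of $u_i$, and $u_i\sim w_j$, which is bad. Instead we swap $u_i$ with $u_j$: $u_i$ receives $u_j$'s color, which sits at $w_j$, a neighbor of $u_i$, which is also bad. So a direct swap within one side does not help.

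What does work is to flip $w_i$ first. Its color is shared with $u_i$, a non-neighbor, so we flip $w_i$ to some other color already used in $W$. This is feasible provided that color is not on $u_j$ for any $j\neq i$. Iterating such moves, we aim to reach a situation where $W$ uses colors disjoint from $U$.

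\textbf{Step 2.} Once the color sets of $U$ and $W$ are disjoint, proceed exactly as in \Cref{the:universal}. Recolor every vertex of $U$ to a single $U$-color one by one; all intermediate colorings are feasible, since that color is absent from $W$. Then recolor all of $W$ to a single $W$-color in the same way. Each step is a flip that keeps feasibility and never increases the color count, so the whole path is positive and ends at a $2$-coloring.

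The main obstacle is Step 1, especially the fully crossed case of \Cref{lem:k_k}, where every color appears exactly once on each side on matching pairs $\{u_i,w_i\}$. There no flip is feasible, and we must use a swap. The natural candidate is to swap the colors of $u_i$ and $w_j$ for $i\neq j$. Afterwards $u_i$ carries color $c_j$, and its $W$-neighbors are all vertices except $w_i$. The vertex $w_j$ now carries $c_i$, so $c_j$ appears only at $u_j$ and $u_i$ in $U$, which is fine. Symmetrically, $w_j$ carries $c_i$, whose $U$-occurrence is now only at $u_j$. But $u_j\sim w_i$, not $w_j$, so this is also fine. This swap is therefore feasible and keeps $k$ colors. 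Moreover, color $c_j$ now appears twice in $U$ and color $c_i$ twice in $W$, while $c_i$ no longer appears in $U$.

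From this configuration, flips become available: $w_i$ can take color $c_i$ only if this is consistent with its neighbors, and its neighbors are $u_l$ for $l\neq i$. We continue by flipping vertices toward duplicated colors, which eventually empties a color class and reduces the number of colors. We will check this carefully, and then argue by induction on the number of shared colors that Step 1 terminates.
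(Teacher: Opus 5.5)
Your proposal has a genuine gap: Step 1 is never established. In the general case you only say that iterating flips of $w_i$ ``aims'' to make the color sets of $U$ and $W$ disjoint. You do not show that a suitable target color exists at each step, or that the process terminates, and the detours through swapping $w_i,w_j$ or $u_i,u_j$ lead nowhere. In the fully crossed case you end with further unjustified flips (``which eventually empties a color class \dots\ We will check this carefully'').

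Much of this difficulty comes from aiming at a stronger intermediate target than you need. Your Step 2 does not require the color sets of $U$ and $W$ to be disjoint. Recoloring all of $U$ to one color $c$ only needs $c$ to be absent from $W$. Once $U$ is monochromatic in $c$, any color currently used in $W$ can absorb all of $W$. So a \emph{single} color occurring in only one partition is enough.

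With that observation the proof closes at once, and this is exactly the paper's argument. Either some color occurs in only one partition, and Step 2 applies directly (the paper's Case~1, via the argument of \Cref{the:universal}). Or every color occurs in both partitions. Then your structural fact forces each color to occur exactly once on each side, on a matched pair $\{u_i,w_i\}$, which is the fully crossed configuration. There, the swap of $u_i$ and $w_j$ that you found (it is also the paper's swap) is feasible and keeps the number of colors. Afterwards $c_j$ occurs only in $U$, at $u_i$ and $u_j$, so Step 2 finishes the job. No induction on the number of shared colors is needed.

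Two small slips in your feasibility check:
after the swap, $c_i$ has no occurrence in $U$ at all (not one at $u_j$, which carries $c_j$), and that is precisely why $w_j$'s new color is safe. Also, $w_i$ already carries $c_i$, so ``$w_i$ can take color $c_i$'' is vacuous.
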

\begin{proof}
    Let $G=(V,E)$ be a $k$-crown graph and $x$ any coloring using at least three colors. We show that no NGLOPs exist by proving the existence of a positive path from $x$ to a $2$-coloring.

    First, we show that if $x$ contains more than $c$ colors, there exists a positive path to a $k$-coloring. Then, we prove that for any coloring $y$ using $k$ or fewer colors, a positive path to a $2$-coloring exists.

    For the first part, fix any $k$ colors $C$ in use in $x$. Recolor, one by one, each vertex with a color from $C$ which none of its $k-1$ neighbors has. Note that we can never lose a color from $C$, so the fitness will never increase, since coloring a vertex with a color from $C$ will never introduce a new color. Thus, for the resulting coloring $y$, we have $x \transReachPlus y$.

    We now differentiate two cases.

    \emph{Case 1: There is one color $c$ such that it only appears in one partition of the crown graph.}

    Call that partition $U$. Then a positive path exists from $y$ to a coloring $z$ such that all vertices in $U$ are colored in $c$ in $z$. Further, from $z$, a positive path exists to a coloring $z$ such that all vertices in $U$ remain colored in $c$, and all vertices in the opposite partition are colored in one color. The proof for this runs analogously to the one given for \Cref{the:universal}.

    \emph{Case 2: All colors appear in both partitions.}

    We first note that this implies $y$ being a $k$-coloring, with each color appearing exactly twice (once in each partition), since otherwise a color has to be present twice in a partition, so at least one of them will be adjacent to a vertex of the same color in the other partition. 

    Let $c$ be any color in use in $y$ and $u,v \in V$ the two vertices colored in $c$ in $y$ and let $w \in V$ be any other vertex in the same partition as $v$. Define a coloring $z$ derived from $y$ by swapping the colors of $u$ and $w$. 
    We know $z$ to be feasible, as $u$ and $w$ were the only instances of their respective colors in their partitions, so the swap did not create monochromatic edges. Further, the swap did not change the number of colors used. Therefore, $y \reachPlus z$.

    Now, we know $z$ to be a coloring where all instances of the color $c$ appear in the same partition. Therefore, by \emph{Case 1}, there exists a $2$-coloring $a$ such that $z \transReachPlus a$.

    With this, we have proven that $k$-crown graphs contain no NGLOPs for the \flipSwapN, making the corresponding fitness landscape equimodal. As crown graphs are bipartite, it is also unimodal by \Cref{prop:uni_equi}.
\end{proof}

\newcommand{\drawSpokedC}{

\begin{figure}[htb!]
    \centering
    \begin{tikzpicture}[scale=0.6,
        greennode/.style={circle, draw=green!60, fill=green!5, very thick, minimum size=7mm},
        rednode/.style={rectangle, draw=red!60, fill=red!5, very thick, minimum size=6mm},
        bluenode/.style={rectangle, draw=blue!60, fill=blue!5, very thick, minimum size=6mm, rounded corners},
    ]
        \node[rednode] (v0) at (canvas polar cs:angle=0,radius=3cm) {R};
        \node[greennode] (v1) at (canvas polar cs:angle=30,radius=3cm) {G};
        \node[bluenode] (v2) at (canvas polar cs:angle=60,radius=3cm) {B};
        \node[rednode] (v3) at (canvas polar cs:angle=90,radius=3cm) {R};
        \node[greennode] (v4) at (canvas polar cs:angle=120,radius=3cm) {G};
        \node[bluenode] (v5) at (canvas polar cs:angle=150,radius=3cm) {B};
        \node[rednode] (v6) at (canvas polar cs:angle=180,radius=3cm) {R};
        \node[greennode] (v7) at (canvas polar cs:angle=210,radius=3cm) {G};
        \node[bluenode] (v8) at (canvas polar cs:angle=240,radius=3cm) {B};
        \node[rednode] (v9) at (canvas polar cs:angle=270,radius=3cm) {R};
        \node[greennode] (v10) at (canvas polar cs:angle=300,radius=3cm) {G};
        \node[bluenode] (v11) at (canvas polar cs:angle=330,radius=3cm) {B};

        \draw (v0) to (v1);
        \draw (v1) to (v2);
        \draw (v2) to (v3);
        \draw (v3) to (v4);
        \draw (v4) to (v5);
        \draw (v5) to (v6);
        \draw (v6) to (v7);
        \draw (v7) to (v8);
        \draw (v8) to (v9);
        \draw (v9) to (v10);
        \draw (v10) to (v11);
        \draw (v11) to (v0);

        \draw (v0) to (v5);
        \draw (v0) to (v7);
        
        \draw (v1) to (v6);
        \draw (v1) to (v8);
        
        \draw (v2) to (v7);
        \draw (v2) to (v9);
        
        \draw (v3) to (v8);
        \draw (v3) to (v10);
        
        \draw (v4) to (v9);
        \draw (v4) to (v11);
        
        \draw (v5) to (v10);
        
        \draw (v6) to (v11);
    \end{tikzpicture}
    \caption{The spoked $C_{12}$.}
    \label{fig:spoked_c12}
\end{figure}
}

Finally, we give an example of a class of graphs that induces a multimodal landscape on the \flipSwapN to demonstrate that flipping and swapping cannot color even some simple graphs.

\begin{definition}[Spoked $C_{12k}$]
    Let $G$ be the cycle graph $C_{12k}$ with additional edges as follows. Let $V = \{0, \ldots 12k-1\}$ be the vertices of $G$ in order around the cycle. For any vertex $i$ we call $(i + 6k) \bmod 12k$ its opposite vertex. For any vertex $v$ we then add an edge from $v$ to both of its opposite vertex's neighbors on the cycle.
\end{definition}

\drawSpokedC

Note that the spoked $C_{12k}$ is bipartite as $C_{12k}$ is bipartite and we only added edges between different partitions. \Cref{fig:spoked_c12} shows the spoked $C_{12}$.
We now state that any spoked $C_{12k}$ induces a multimodal landscape on the \flipSwapN.

\begin{theorem}\label{the:spoked12k}
    Let $G$ be a spoked $C_{12k}$. Then $(\CalX_n, \flipSwap, \fCol_G)$ is multimodal. 
\end{theorem}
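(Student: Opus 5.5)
The plan is to exhibit an explicit non-global local optimum, namely the $3$-coloring shown in \Cref{fig:spoked_c12}, generalized to all $k$. Let $x_i = i \bmod 3$ for $i \in \{0,\ldots,12k-1\}$.

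\textbf{Step 1: $x$ is proper.} By the definition of the spoked $C_{12k}$, vertex $i$ has exactly four neighbors: $i\pm 1$ and $i+6k\pm 1$, all taken mod $12k$. Since $12k \equiv 0 \pmod 3$, the wrap-around of the cycle respects residues mod $3$. Since also $6k \equiv 0 \pmod 3$, we get $x_{i+6k\pm1} = x_{i\pm1}$. Hence the neighbors of $i$ carry exactly the two colors different from $x_i$, and each of these colors appears on exactly two distinct neighbors. (These four neighbors are distinct because $12k \ge 12$.) So $x$ is proper and uses three colors.

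\textbf{Step 2: no flip or swap is improving or neutral.} A flip recolors one vertex $i$.
\begin{itemize}
\item If the new color is one of the two other colors already in use, it creates a monochromatic edge, since that color occurs among the neighbors of $i$. This adds a penalty of $n+1$.
\item If the new color is unused, the number of colors rises to four.
\end{itemize}
Either way the fitness gets worse. A swap between two vertices of the same color is the identity, so consider $u$ with color $a$ and $w$ with color $b \neq a$. After the swap, $u$ has color $b$. In $x$, $u$ has two neighbors colored $b$, and the swap changes the color of at most one of them (namely $w$, if $w$ is a neighbor of $u$). So at least one neighbor of $u$ still has color $b$, and a monochromatic edge is created. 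This "two witnesses per color" observation is the crux of the argument, and is precisely what the spokes add compared with the plain $C_{6k}$ of \Cref{the:c6kSwap}. It is also where I would be most careful about the distinctness of the four neighbors.

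\textbf{Step 3: conclude.} Every $\flipSwap$-neighbor of $x$ has strictly worse fitness, so $\{x\}$ is a local optimum plateau with fitness $3$. As noted after its definition, the spoked $C_{12k}$ is bipartite, so a proper $2$-coloring exists with fitness $2$. Any global optimum lies in some local optimum plateau, for instance its maximal plateau, which cannot have better neighbors. Thus there are local optimum plateau elements of fitness $3$ and of fitness $2$, and the landscape is multimodal for every $k \ge 1$.
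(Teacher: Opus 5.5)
Your proof is correct and follows essentially the same route as the paper. Both use the cyclic coloring $x_i = i \bmod 3$, observe that each vertex sees each of the two other colors on exactly two neighbors (one via the cycle, one via a spoke), and conclude that every flip and swap is worsening. Your version is slightly more careful than the paper's in three places. You verify that the four neighbors are distinct. You treat swaps with non-adjacent differently-colored vertices, whereas the paper only discusses swapping with a neighbor. And you explicitly argue that a proper $2$-coloring lies on a local optimum plateau.
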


\begin{proof}
    We state a feasible $3$-coloring of $G$ and then prove that it is an NGLO in the landscape $(\CalX_n, \flipSwap, \fCol_G)$.
    Let the vertices of the cycle be colored red, green and blue cyclically. This means that for any vertices $i, j$, if $i \equiv j \pmod 3$, then $x_i = x_j$. For an example, see Figure \ref{fig:spoked_c12}. Note that since $12k$ is divisible by $3$, each color appears equally often.

    To show that this coloring is feasible, note that each vertex $i$ four neighbors: its two neighbors along the cycle as well as the vertices neighboring its opposite vertex along the cycle. We know that its opposite vertex along the cycle has the same color as itself, as $i \equiv i+6k \pmod 3$. Likewise, its two neighbors along the cycle have the same two colors as the two neighbors on the cycle of its opposite vertex. We know those colors to be different from $x_i$.

    We now prove that this coloring is a local optimum. Flipping any vertex clearly either introduces a fourth color, thus worsening the fitness, or gives it the same color as a neighbor of itself, as each vertex is adjacent to both remaining colors, rendering the coloring infeasible.

    To show that any swap also worsens the fitness, let $v$ be a vertex. Since $v$ has four neighbors, two of each remaining color, swapping with either of them creates a monochromatic edge with the other one, thereby rendering the coloring infeasible.
    As we have stated a coloring which is an NGLO at three colors, we have proved the landscape to be multimodal.
\end{proof}

\printbibliography

\appendix

\end{document}